\documentclass{article}

\usepackage[preprint]{neurips_2026}

\usepackage[utf8]{inputenc} 
\usepackage{caption}
\usepackage[T1]{fontenc}    
\usepackage{hyperref}       
\usepackage{url}            
\usepackage{booktabs}       
\usepackage{amsfonts}       
\usepackage{nicefrac}       
\usepackage{wrapfig}
\usepackage{microtype}      
\usepackage{xcolor}         
\usepackage{xspace}
\usepackage{algorithm}
\usepackage{algpseudocode}
\usepackage{amsmath}
\usepackage{subcaption}
\usepackage{graphicx}
\usepackage{amsthm}
\usepackage{array}
\usepackage{placeins}
\usepackage{multirow}
\hypersetup{
    colorlinks=true,
    citecolor=[rgb]{0.0, 0.3, 0.7},
    urlcolor=[rgb]{0.0, 0.3, 0.7},
    linkcolor=[rgb]{0.0, 0.3, 0.7},
}

\newcommand{\method}{\textsc{OP-Mix}\xspace}

\usepackage{pifont}
\newcommand{\cmark}{\textcolor{green!70!black}{\ding{51}}}
\newcommand{\xmark}{\textcolor{red}{\ding{55}}}

\newcommand{\D}{\mathcal{D}}

\newcommand{\LM}{\text{LM}}

\newtheorem{proposition}{Proposition}
\newtheorem{assumption}{Assumption}
\theoremstyle{definition}

\theoremstyle{plain}
\newtheorem*{remark}{Remark}

\newtheorem{lemma}{Lemma}[section]
\newtheorem{corollary}[lemma]{Corollary}
\newcounter{savedtheorem}

\newcommand{\ie}{\textit{i.e.},\xspace}
\newcommand{\eg}{\textit{e.g.},\xspace}

\title{Always Learning, Always Mixing: \\Efficient and Simple Data Mixing All The Time}

\author{
\textbf{Michael Y. Hu}$^{1}$ \quad
\textbf{Apurva Gandhi}$^{2}$ \quad \\
\textbf{Kyunghyun Cho}$^{1}$ \quad
\textbf{Tal Linzen}$^{1}$ \quad
\textbf{Pratyusha Sharma}$^{1,3}$ \vspace{0.5em} \\
$^{1}$New York University \quad
$^{2}$Carnegie Mellon University\quad
$^{3}$Microsoft \\
\texttt{\{michael.hu, kyunghyun.cho, linzen\}@nyu.edu} \\
\texttt{apurvag@andrew.cmu.edu} \quad
\texttt{pratysharma@microsoft.com}
}

\begin{document}

\maketitle


\begin{abstract}

Data mixing decides how to combine different sources or types of data and is a consequential problem throughout language model training. 
In pretraining, data composition is a key determinant of model quality; in continual learning and adaptation, it governs what is retained and acquired. 
Yet existing data mixing methods address only one phase of this lifecycle at a time: 
some require smaller proxy models tied to a single training phase, others assume a fixed domain set, and continual learning lacks principled guidance altogether. 
\emph{We argue that data mixing is fundamentally an online decision making problem---one that recurs throughout training and demands a single, unified solution.} 
We introduce \method (On-Policy Mix), a data mixing algorithm that operates across the entire language model training lifecycle. 
Our main insight is that candidate data mixtures can be cheaply simulated by interpolating between low-rank adapters trained directly on the current model, eliminating separate proxy models and ensuring the search is always grounded in the model's actual learning dynamics.
Across pretraining, continual midtraining, and continual instruction tuning, \method consistently finds near-optimal mixtures while using a fraction of the compute of the baselines. In pretraining, \method improves upon training without mixing by 6.3\% in average perplexity. 
For continual learning, \method matches the performance of both retraining and on-policy distillation while using 66\% and 95\% less overall compute, respectively. 
\method suggests a different view of language model training: not a sequence of distinct phases, but a single continuous process of learning from data.

\end{abstract}

\begin{figure}[t]
    \centering
    \includegraphics[width=0.85\linewidth]{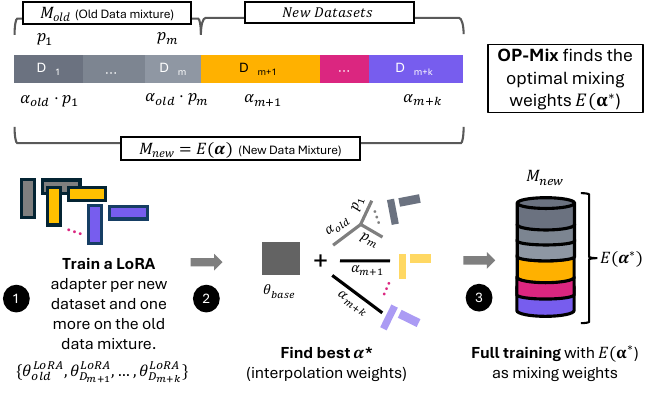}
    \caption{\textbf{Overview of \method.} \method aims to cheaply estimate optimal data mixing ratios in a continual setting. \textbf{(1)} Train a lightweight LoRA adapter on new domains to estimate future performance. \textbf{(2)} Interpolate adapters to simulate different data mixtures without retraining and then estimate the optimal mixture ratio. \textbf{(3)} Train the base model with the computed mixture.}
    \label{fig:overview}
\end{figure}

\begin{figure}[t]
    \centering
    \includegraphics[width=\linewidth]{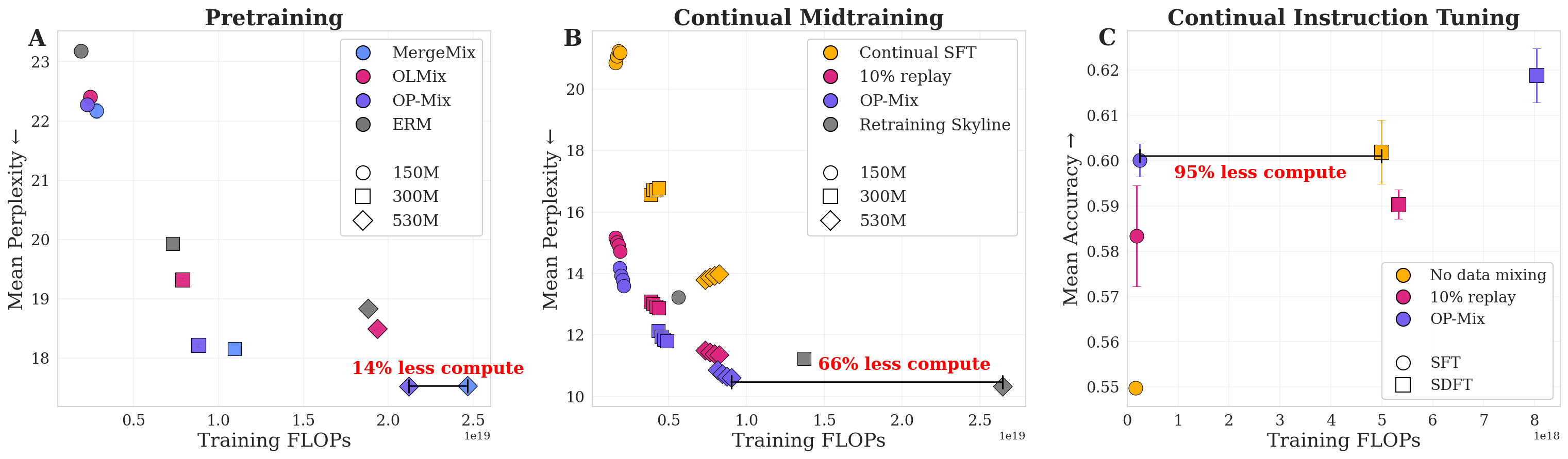}
    \caption{\textbf{\method (purple) Pareto-dominates the performance-efficiency frontier} when tested alongside baselines across pretraining, continual midtraining, and continual instruction tuning.}
    \label{fig:efficiency}
\end{figure}

\section{Introduction}

Language models are trained on carefully curated data mixtures, yet the science of constructing the right mixture remains nascent. 
The dominant approach---training small proxy models on candidate mixtures and extrapolating to full training---is combinatorially expensive and scales poorly as the number of domains grows \citep{ye2025data,liu2025regmix,chen2025olmix}. 
Furthermore, most data mixing approaches are specialized towards pretraining and assume a fixed domain set \citep{chen2025aioli,fan2024doge,jiang2025adaptive,xie2023doremi,skillit}: in practice, available training domains evolve continuously as new tasks are defined, new corpora are collected, and new capabilities are prioritized. This induces a natural continual learning problem, where the goal is to incorporate new data without catastrophically forgetting what the model has already learned. We ask:
\begin{center}
    \emph{What is the right data mix, and how do we find it efficiently as the data itself keeps changing?}
\end{center}

We propose \method (\textbf{O}n-\textbf{P}olicy Mix), an algorithm that estimates optimal data mixtures by combining two insights. First, rather than train separate proxy models for each candidate data mixture, \method trains a single low-rank adapter (LoRA, \cite{hu2022lora}) per data domain directly from the current model, keeping the proxy model \textit{on-policy} with the model being trained---\ie reflective of its current state. Second, it uses linear interpolation between LoRAs as a proxy for the loss surface of full data mixing, following recent works \citep{wang2026mergemix,tao2025mergetomix}.
This bypasses the need to retrain proxies for every different data mix ratio, escaping the combinatorial explosion of training runs.
These two insights allow \method to search over data mixtures with minimal additional compute, no separate proxy models, and natural accommodation to new domains: when a new dataset arrives, we simply train another LoRA and re-fit the mixture.

We evaluate \method across three stages of the language model lifecycle---pretraining \citep{radford2019language,devlin2019bert}, continual midtraining \citep{olmo2,liu2026midtraining}, and continual instruction tuning \citep{wei2022finetuned}---and find that our single algorithm suffices for all three. In pretraining, \method improves over no data mixing by 6.3\% in average perplexity and matches the best data mixing baseline's performance while using 14\% less compute. In continual midtraining, \method achieves the performance of \textit{full retraining} at a fraction of the cost. 
Finally, in continual instruction tuning, \method composes with on-policy self-distillation 
\citep{shenfeld2026selfdistillation,lu2025thinking,zhao2026self}, 
yielding further gains without modifications to either algorithm. 
Our contributions are as follows:
\begin{enumerate}
    \item \textbf{The first universal data mixing algorithm: }\method is the first data mixing algorithm that both expands to new data domains and simulates candidate mixtures without separate proxy models. This enables \method to continually mix data even as the data evolves, overcoming the need for a different algorithm at each phase of the training pipeline (\S \ref{sec:method}). 
    \item \textbf{State-of-the-art across the entire training lifecycle:} A single instantiation of \method achieves state-of-the-art performance in pretraining, continual midtraining, and continual instruction tuning, demonstrating that phase-specific algorithms are unnecessary. (\S \ref{sec:experiments}).
    \item \textbf{\method enables continual learning, matching on-policy distillation with 95\% less compute:}
    Applied atop standard SFT during continual instruction tuning, \method recovers the gains of self-distillation finetuning (SDFT, \citet{shenfeld2026selfdistillation}) at a fraction of the cost. Combining \method with SDFT also yields further gains, suggesting that data mixing can be an independent axis of improvement from training objective (\S \ref{sec:main-results}).
\end{enumerate}

\section{Background: Data Mixing and Its Limitations}
\label{sec:background}

Let $\D = \{D_1, D_2, \dots, D_m\}$ be a set of $m$ data domains, where domain $D_i$ has $N_i$ tokens. A data mixture is a probability vector $p \in \triangle^{m-1}$, where training on $R$ total tokens uses $p_i \cdot R$ tokens from domain $D_i$. We denote a language model of $S$ parameters trained for $R$ tokens on mixture $p$ as $\LM(S, R, p)$ and measure its performance on downstream task $j \in [J]$ as $f_j(\LM(S, R, p))$. We assume the training objective is to minimize a weighted sum $F = \sum_j w_j \cdot f_j(\LM(S, R, p))$, where weights $w_j$ are user-specified. 
Here, metrics intended to be maximized (\eg accuracy) are negated. 

\paragraph{Batch continual learning.} During training, we may periodically receive $k$ new datasets $D_{m+1}, \dots, D_{m+k}$, in which case the updated domain set becomes $\D \cup \{D_{m+1}, \dots, D_{m+k}\}$. For example, these $k$ new datasets may be instruction fine-tuning datasets, introduced after pretraining. We may then aim to minimize the loss across both pretraining and instruction tuning datasets.

\paragraph{Data mixing.} Data mixing algorithms automate the process of finding the mixture $p$ that minimizes $F$. The core idea in most data mixing algorithms is fitting a simple model $\hat{f_i}(p)$ that predicts the future performance $f_i$ as a function of the performance on the current data mixture $p$ (see \citet{chen2025aioli} for review). One can then minimize $\hat{f_i}(p)$ to estimate an optimal mixture.

\begin{wraptable}{r}{0.6\textwidth}
\vspace{-1em}
\centering
\begin{tabular}{@{}l l p{1.55cm} p{1.3cm}@{}}
\toprule
\multicolumn{2}{@{}l}{Algorithm} & Adapts to new data? & No proxy models? \\
\midrule
ADO      & \citep{jiang2025adaptive} & \centering\xmark & \centering\arraybackslash\cmark \\
Aioli    & \citep{chen2025aioli}     & \centering\xmark & \centering\arraybackslash\xmark \\
DoGE     & \citep{fan2024doge}       & \centering\xmark & \centering\arraybackslash\cmark \\
GRAPE    & \citep{fan2025grape}      & \centering\xmark & \centering\arraybackslash\cmark \\
OLMix    & \citep{chen2025olmix}      & \centering\cmark & \centering\arraybackslash\xmark \\
RegMix   & \citep{liu2025regmix}      & \centering\xmark & \centering\arraybackslash\xmark \\
MergeMix & \citep{wang2026mergemix}   & \centering\xmark & \centering\arraybackslash\cmark \\
\midrule
\method  & (ours)                     & \centering\cmark & \centering\arraybackslash\cmark \\
\bottomrule
\end{tabular}
\caption{\textbf{\method is the only method that expands the data mixture to new data while not using separate proxy models.} The combination of these two features allows \method to be deployed across the language model lifecycle.}
\label{tab:data-mixing-algorithms}
\vspace{-1em}
\end{wraptable}

Previous work has shown that future performance is well-predicted by a log-linear parametric form: $\hat{f_i}(p) = c_i + \exp{(A_i^\top p_i)}$, where $c_i \in \mathbb{R}$ and $A_i \in \mathbb{R}^m$ \citep{ye2025data,chen2025aioli,chen2025olmix}. Data mixing algorithms then aim to estimate such a scaling law as cheaply as possible. A common technique is to fit the scaling law by randomly sampling mixtures from the probability simplex and training proxy models with fewer parameters $S' \ll S$ and less data $R' \ll R$ to approximate the full model's performance on a data mixture: $f_i(\LM(S, R, p)) \approx f_i(\LM(S', R', p))$ \citep{liu2025regmix,chen2025olmix,ye2025data}.

A single data mixing algorithm that spans all of LM training is desirable for both practical reasons (less complexity and phase-specific tuning) and conceptual ones: pretraining, midtraining, and finetuning are not fundamentally different problems for data mixing.
However, two issues limit existing algorithms from operating across this lifecycle. 
First, most data mixing algorithms, being targeted towards pretraining, do not expand their data mixtures. It follows that these algorithms cannot be applied to the continual learning setting, and yet language model training induces a natural continual learning problem from phase to phase. 
Second, data mixing methods that rely on separate proxy models are defunct after pretraining, as open-source model releases typically do not come with a matching small-model proxy \citep{gemma3,llama3,qwen3}.
Moreover, separate smaller proxy models have been shown to yield suboptimal mixtures for the target model, as they diverge from the base model's dynamics at scale \citep{jiang2025adaptive,chen2025olmix} and the number of proxies explodes combinatorially with the number of datasets.

\begin{figure}[t]
    \centering
    \includegraphics[width=\linewidth]{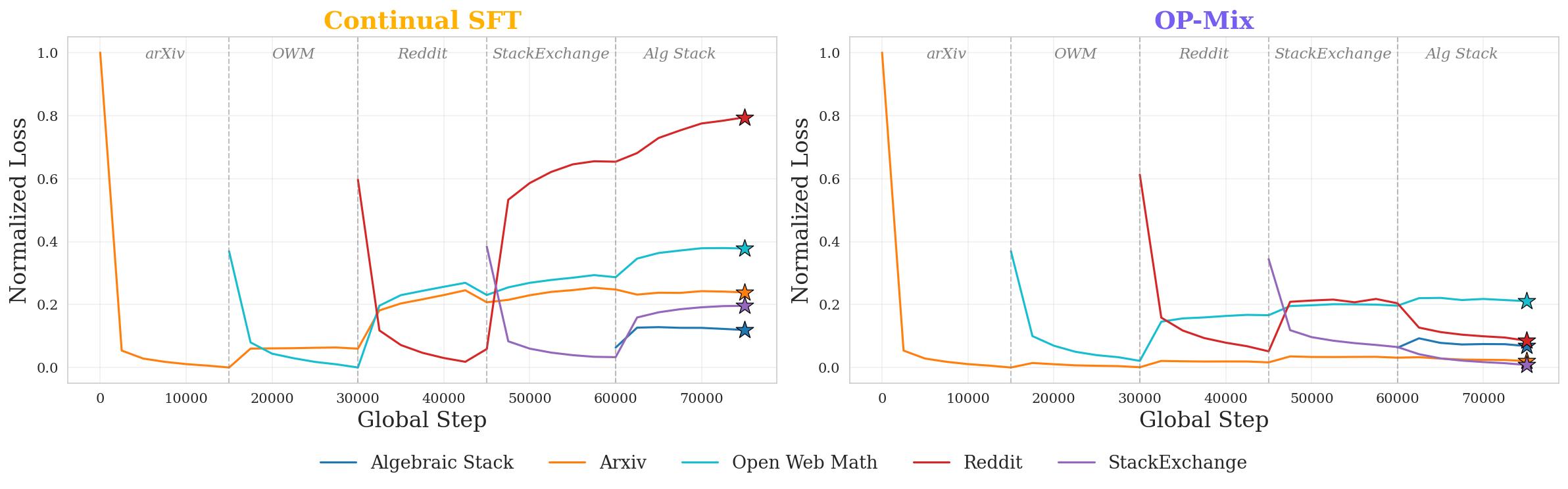}
    \caption{\textbf{\method enables continual learning:} For a 530M parameter model, \method mitigates forgetting 27\% better on average and 71\% better on Reddit than Continual SFT with WSD-S learning schedule, a method specifically designed for continual learning \citep{wen2025understanding}.}
    \label{fig:continual-learning-dynamics}
\end{figure}

\section{\method: On-Policy Data Mixing}
\label{sec:method}

In this work, we propose \method, a data mixing algorithm that works effectively for any stage of language model training by using on-policy proxies instead of separate proxies and efficiently expanding data mixtures. 
\emph{On-policy} here means that the proxy is built from the model being trained, rather than a separately initialized model whose learning dynamics may diverge from its target.
\method uses Low-Rank Adaptation (LoRA, \cite{hu2022lora}) to cheaply estimate the performance of full training. LoRA reduces the necessary compute for testing new data mixtures while being tied to the base model and circumvents the ambiguities of creating separate proxy models later in training. 

To simulate new data mixtures without performing additional training runs, we interpolate LoRA weights, as inspired by \citet{wang2026mergemix} and \citet{tao2025mergetomix}. This allows us to train one LoRA per data domain and estimate the effect of mixing domains post-hoc using only forward passes. 
We also expand the data mixture when new domains arrive, taking inspiration from pretraining mixture reuse in \citet{chen2025olmix}. In each stage, instead of retraining a new LoRA for \textit{every} previously seen domain, we train a single ``old'' adapter $\theta_{D_{\text{old}}}^{\text{LoRA}}$, keeping probabilities of old domains constant and only adjusting the ratio between the old mixture and incoming new domains.

\paragraph{\method (Algorithm~\ref{alg:opmix}).}

In the \textbf{continual setting}, when $K$ new domains $D_{m+i}, i \in [K]$ arrive, we train a single LoRA adapter per domain $D_{m+i}$, starting from the current model. This gives us $\theta_{D_{m+i}}^{\text{LoRA}}$, a cheap approximation of what full finetuning on $D_{m+i}$ would produce. We also train $\theta_{D_{\text{old}}}^{\text{LoRA}}$ on the old data to approximate continued training on $D_{\text{old}}$. Next, we evaluate linear interpolation merges of $\theta_{D_{m+1}}^{\text{LoRA}}, \dots, \theta_{D_{m+K}}^{\text{LoRA}}$ and $\theta_{D_{\text{old}}}^{\text{LoRA}}$. We sample interpolation points in the $K$-simplex $\triangle^{K}$, and each  interpolation point simulates a different mixing ratio between old and new data without additional training. We then fit a regression model to these evaluations, producing a smooth loss surface over the interpolation path (Algorithm~\ref{alg:opmix}, lines 7--12). Finally, we minimize over this surface to obtain $\alpha^\star$, the tradeoff between old and new data, distribute the resulting weight across all datasets, and do the final training run. 
See Figure~\ref{fig:overview} for a visual overview.

For \textbf{pretraining}, we begin with a warmup phase in which every document is sampled with equal probability (empirical risk minimization). After warmup, we reintroduce each dataset as new domains to adjust the data mixture. In §\ref{sec:main-results}, we set warmup to 20\% of the overall token budget.

\begin{algorithm}[t]
  \caption{\method\ (single continual learning step)}
  \label{alg:opmix}
  \begin{algorithmic}[1]
    \State \textbf{Input:} Base model $\theta_{\text{base}}$; previous domains $\{D_1,\dots,D_m\}$ with mixture
    $p_{t-1}\in\triangle^{m-1}$; new domains $\{D_{m+1},\dots,D_{m+K}\}$;
    mixture prior $\mu\in\triangle^{m+K-1}$; search iterations $P$;
    regularization strength towards prior $\lambda$.

    \State Train LoRA adapter $\theta^{\text{LoRA}}_{\text{old}}$ on the mixture $p_{t-1}$,
    starting from $\theta_{\text{base}}$.

    \For{$k \in [K]$}
      \State Train LoRA adapter $\theta^{\text{LoRA}}_{D_{m+k}}$ on $D_{m+k}$,
      starting from $\theta_{\text{base}}$.
    \EndFor

    \State Define the mixture expansion $E:\triangle^{K} \to \triangle^{m+K-1}$ by
    \[
      E(\boldsymbol{\alpha})_i \;=\;
      \begin{cases}
        \alpha_{\text{old}}\, p_{t-1}(D_i) & i \le m \\[2pt]
        \alpha_i & i > m,
      \end{cases}
      \qquad
      \boldsymbol{\alpha} = (\alpha_{\text{old}}, \alpha_{m+1}, \dots, \alpha_{m+K}).
    \]

    \For{$p \in [P]$}
      \State Sample $\boldsymbol{\alpha}_p \sim \triangle^{K}$.
      \State Form the merged adapter
        $\theta^{\text{LoRA}}_{\boldsymbol{\alpha}_p} \;\leftarrow\;
        \alpha_{\text{old}}\,\theta^{\text{LoRA}}_{\text{old}}
        + \sum_{k=1}^{K}\alpha_{m+k}\,\theta^{\text{LoRA}}_{D_{m+k}}.$
      \State Evaluate per-domain loss
        $y_{p,j} = f_j\!\bigl(\theta^{\text{LoRA}}_{\boldsymbol{\alpha}_p}\bigr)
        \qquad \text{for } j=1,\dots,N.$
    \EndFor

    \State Fit log-linear regressors $\hat g_j(\boldsymbol{\alpha})$ to
    $\{(\boldsymbol{\alpha}_p, y_{p,j})\}_{p=1}^{P}$.

    \State Solve the regularized mix optimization. \Comment{cvxpy \citep{diamond2016cvxpy}}
    \[
      \boldsymbol{\alpha}^\star \;=\; \arg\!\min_{\boldsymbol{\alpha}\in\triangle^{K}}\;
      \frac{1}{N}\sum_{j=1}^{N}\hat g_j(\boldsymbol{\alpha})
      \;+\; \lambda \, D_{\text{KL}}\!\bigl(E(\boldsymbol{\alpha}) \,\big\|\, \mu\bigr).
    \]

    \State Set the new mixture $p_t \leftarrow E(\boldsymbol{\alpha}^\star)$ and continue
    training $\theta_{\text{base}}$ on $p_t$.

    \State \Return $p_t$ and the fine-tuned model (the next stage's $\theta_{\text{base}}$).
  \end{algorithmic}
\end{algorithm}

\section{Experimental Results}
\label{sec:experiments}

We examine \method in several settings: pretraining, continual midtraining, and continual instruction fine-tuning. In pretraining, we examine \method's ability to find a good pretraining mixture for fixed training corpora. In midtraining \citep{olmo2}, high quality datasets are upweighted relative to the original pretraining data mixture; here we continually finetune a pretrained model ladder from HuggingFace on successive reference datasets. Finally, we apply \method to the continual instruction tuning setting, where a language model is finetuned on successive question-answering datasets, and consider two different objectives: cross-entropy loss and on-policy distillation \citep{shenfeld2026selfdistillation,lu2025thinking,zhao2026self}. Further training details and hyperparameter choices are in Appendix \ref{app:reproducibility}.




\paragraph{Pretraining baselines.}
\textbf{ERM} samples from each data domain with probability proportional to domain size, equivalent to not optimizing the data mixture.
\textbf{MergeMix}~\citep{wang2026mergemix} finetunes independent models on each dataset, merges to simulate mixing, and uses regression to estimate the optimal mixture; we adapt it to pretraining with a 20\% ERM warmup before finetuning (see Appendix \ref{app:pretrain} for more details). It is a natural comparison---essentially \method without data mixture expansion (Algorithm~\ref{alg:opmix}, line 6) and with full finetuning in place of LoRA.
\textbf{OLMix}~\citep{chen2025olmix} trains small proxy models on randomly sampled mixtures over datasets and uses regression to estimate the optimal mixture.


\paragraph{Continual learning baselines.}
\textbf{Continual fine-tuning with WSD-S}~\citep{wen2025understanding} trains on each dataset in succession with no replay of old data, using Weight-Stable-Decay-Simplified, a learning rate schedule designed for continual learning. For simplicity, we use WSD-S for all methods, including \method.
\textbf{10\% data replay} extends the observation of \citet{bethune2025scaling} that having 10\% of finetuning data be pretraining data mitigates catastrophic forgetting; we train with a 1:9 ratio between old and new data.
\textbf{Retraining} (skyline): After training for $K \cdot R$ tokens from $K$ datasets and receiving a $(K{+}1)$th dataset, train again for $(K{+}1) \cdot R$ tokens over all $K{+}1$ datasets. 


To our knowledge, there currently are no adaptive data mixing baselines for continual learning. As noted in \S \ref{sec:background}, existing data mixing methods either operate over fixed data domains or require separate proxy models initialized from scratch. 


\begin{figure}[h]
    \centering
    \includegraphics[width=\linewidth]{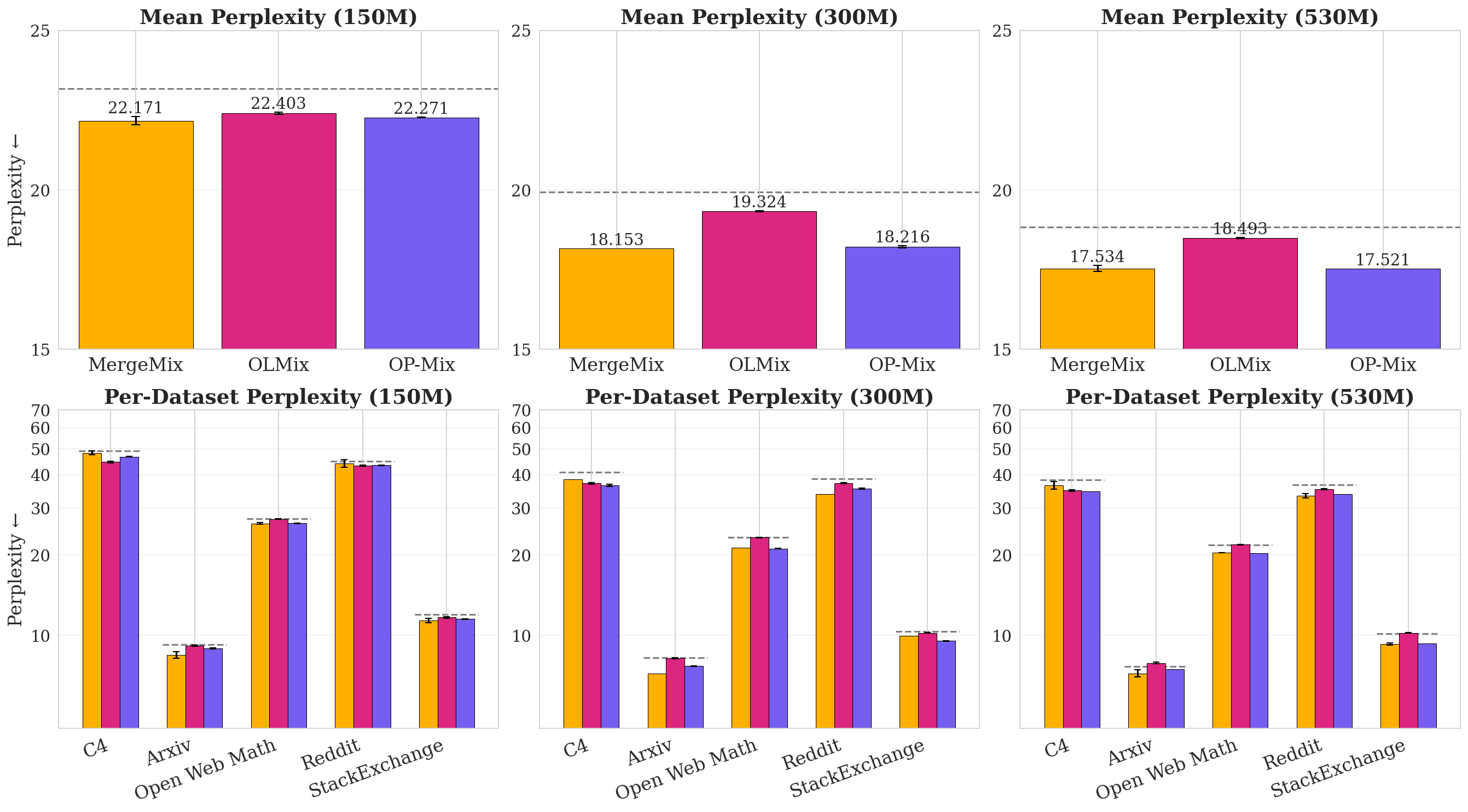}
    \caption{\textbf{Pretraining:} \method outperforms empirical risk minimization (grey line), which samples from all data domains with uniform probability, and beats or matches the performance of other data mixing baselines while being up to 14\% more efficient (Figure \ref{fig:efficiency}).}
    \label{fig:pretraining}
\end{figure}

\subsection{\method Works Across the Language Model Lifecycle}
\label{sec:main-results}

Across pretraining, continual midtraining, and continual instruction tuning, \method achieves state-of-the-art performance (Figures \ref{fig:pretraining}–\ref{fig:continual-instruction}).

\paragraph{Pretraining (Figure \ref{fig:pretraining}).} We pretrain three different model sizes---150M, 300M, and 530M---from the OLMo model ladder \citep{groeneveld-etal-2024-olmo} to Chinchilla-optimal \citep{hoffmann2022training} token counts of 3.2B, 6.5B, and 10.5B, respectively. We construct the pretraining data from 5 data domains: Algebraic Stack, ArXiv, c4, Reddit, and StackExchange \citep{c4,weber2024redpajama}. During evaluation, we measure perplexity on all data domains and compute overall perplexity by a simple unweighted average. Each data domain contains more than 10.5B tokens, so no data mixture trains for more than one epoch on any data domain. 

In pretraining, \method matches the performance of MergeMix using up to 14\% less compute (Figure \ref{fig:efficiency}A) and outperforms OLMix by 5-6\% at every scale. This is consistent with our on-policy hypothesis: \method and MergeMix both build proxies from the model being trained, while OLMix uses a separate proxy whose learning dynamics diverge from the base model. These results are roughly mirrored by the downstream evaluations in Appendix Table \ref{tab:downstream}, where \method is either best or second-best in downstream task performance. Overall, \method consistently outperforms ERM in perplexity and downstream evaluations while being more efficient than MergeMix.

\begin{figure}[t]
    \centering
    \includegraphics[width=\linewidth]{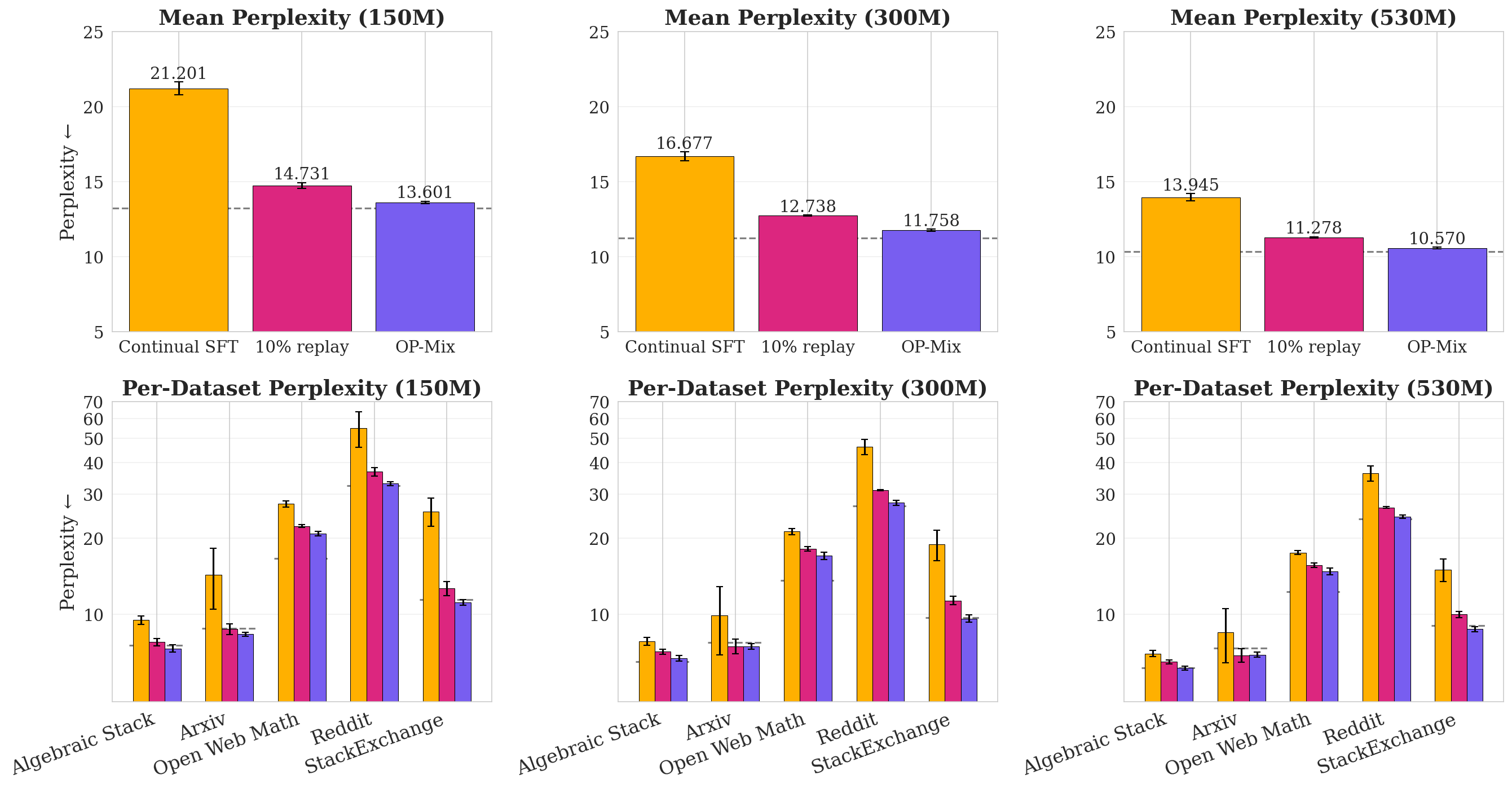}
    \caption{\textbf{Continual midtraining:} \method outperforms other continual learning baselines and is even competitive with full retraining (grey line), despite training on the datasets sequentially.}
    \label{fig:continual-learning}
\end{figure}

\paragraph{Continual Midtraining (Figure \ref{fig:continual-learning}).} In continual midtraining, the user receives a stream of new data domains, emulating the real-life scenario where one updates a base model using new reference datasets. Here, our data mixture must expand per new dataset. In this section, we finetune open-source LMs pretrained on C4 \citep{c4} from the DataDecide model suite \citep{magnusson2025datadecide}. We continually finetune models of parameter counts 150M, 300M, and 530M on Algebraic Stack, ArXiv, Open Web Math, Reddit, and StackExchange in alphabetic order. To account for ordering effects, we cyclically permute the order of the datasets so that each dataset appears once in every order position and train on all five combinations (see Table \ref{tab:ordering} in Appendix).

During continual midtraining, continual SFT suffers severe catastrophic forgetting (Figure \ref{fig:continual-learning-dynamics}), and of the data-mixing methods, \method is best at mitigating it,
nearly matching the performance of full retraining (Figure \ref{fig:vs-retraining}) while using up to 66\% less compute.
In Figure \ref{fig:lora-merge} (Appendix), we also include an ablation that merges in trained LoRAs into the base model using the optimized $\alpha^*$, instead of full finetuning. Although better than Continual SFT, LoRA-Merge is significantly worse than \method, indicating that there are benefits to using LoRA only as a proxy.

\paragraph{Continual Instruction Tuning (Figure \ref{fig:continual-instruction}).} We take the following continual learning task and ordering verbatim from \cite{shenfeld2026selfdistillation}: we continually finetune Qwen2.5-7B-Instruct \citep{yang2024qwen2} on three instruction-following domains---Tool Use (4k examples), Science (1.2k examples), and Medical (10k examples)---introduced one at a time. In addition to standard SFT, we test Self-Distillation Finetuning (SDFT) \citep{shenfeld2026selfdistillation} as the training objective. Performance is measured by mean accuracy across domains. 

\method on top of standard SFT (60.0\%) matches the performance of SDFT (60.2\%) while using 95\% less compute, demonstrating that data mixing alone can recover the gains of a more sophisticated continual learning algorithm. (SDFT uses more compute as it repeatedly generates and distills on its own training data.) The two methods are also synergistic: combining \method with SDFT achieves the best overall performance (61.9\%), suggesting that data mixing and objective modifications are orthogonal axes of improvement.

\subsection{Efficiency: \method Pareto-Dominates on the Performance-Efficiency Frontier}

Figure \ref{fig:efficiency} (on page 2) compares methods by final performance versus total training FLOPs, counting both mixture selection and final training. \method Pareto-dominates across pretraining, continual midtraining, and continual instruction tuning: no baseline achieves better performance at lower compute. \method's reuse of mixtures especially matters in continual midtraining (Figure \ref{fig:efficiency}B), where the cost of naive retraining grows with every new domain. Unlike retraining, \method turns adaptive data mixing into a lightweight operation that can be repeated whenever new data arrives.

\begin{figure}[ht]
    \centering
    \includegraphics[width=0.8\linewidth]{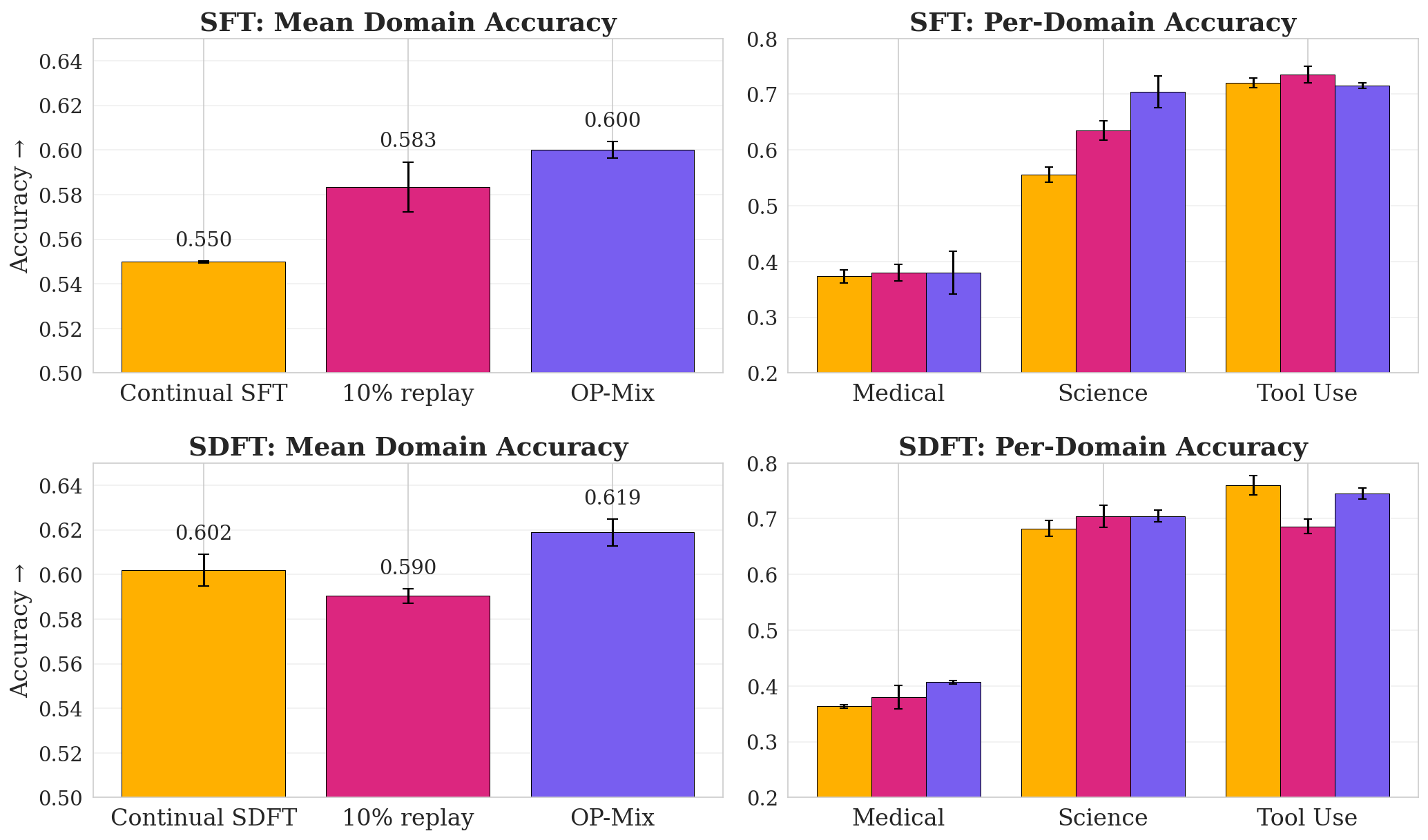}
    \caption{\textbf{Continual instruction tuning: } \method works across cross-entropy and KL distillation objectives, improving the performance of both supervised finetuning and self-distillation fine tuning.}
    \label{fig:continual-instruction}
\end{figure}

\begin{figure}[h]
    \centering
    \includegraphics[width=\linewidth]{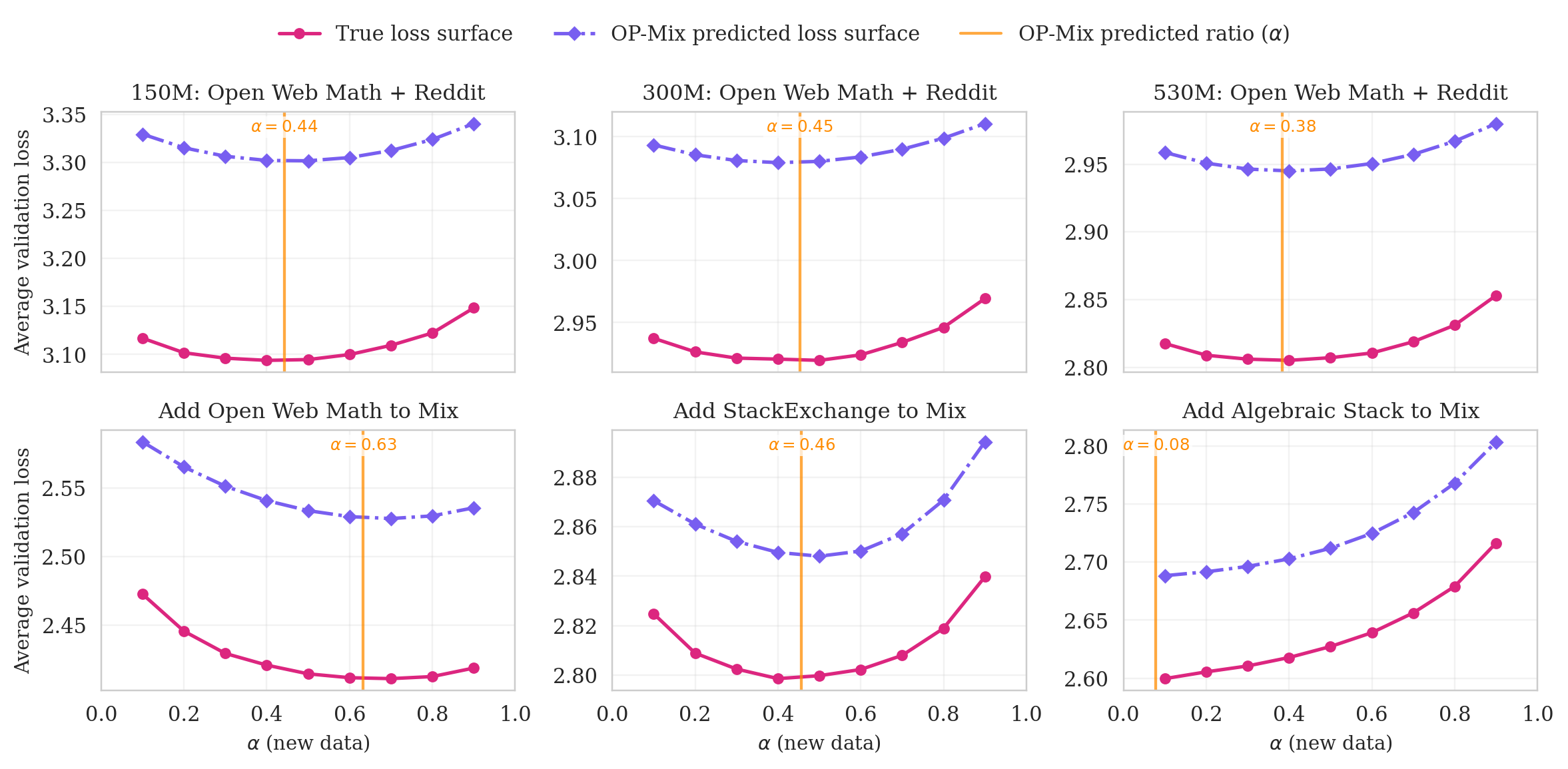}
    \caption{\textbf{\method (purple) closely tracks the true data mixing loss surface (red)}, which we obtain by running full finetuning to completion at each mixture, for different model sizes.}
    \label{fig:optimal}
\end{figure}

\section{Analysis}

\subsection{\method Reliably and Efficiently Estimates Optimal Data Mixtures}
\label{sec:optimality}

We ask whether \method consistently recovers good mixing weights. In Figure \ref{fig:optimal}, we plot the true loss surface with respect to mixture proportions in red and the estimated loss surface from \method in purple. We generate the true loss surface by training on those proportions for all of training, as opposed to training a proxy. We find that merging LoRAs closely tracks the true data mixing loss surface. More concretely, in Figure \ref{fig:regret} in the Appendix, we sweep for the best mixture at each stage in the continual midtraining setting and find that the average increase in loss of \method from the optimal proportions is 0.9\%, compared to a 2.9\% increase for the fixed 10\% data replay baseline. 

\subsection{Theoretical Analysis: Formalizing \method's Sources of Error}
\label{sec:theory}

We now analyze the conditions under which \method recovers an optimal interpolation weight, and bound the suboptimality of its predicted weight to the optimal weight.
This section formalizes the intuition that \method's error is small if 1) LoRA performance is a good approximation of full fine-tuning performance and 2) linear interpolation is a good approximation for mixing. 

\paragraph{Setup.}
Suppose we receive a new domain $D_{m+1}$. Let $F(\alpha) = \frac{1}{N}\sum_{j=1}^{N} f_j(\theta^{\text{train}}(\alpha))$ denote the average evaluation performance of a model trained on the mixture assigning weight $\alpha \in [0,1]$ to $D_{m+1}$ and distributing $1 - \alpha$ over previous domains $D_{\text{old}}$.
\method constructs a proxy for $F$ via two approximations: rather than training a full model on $D_{m+1}$, \method trains two LoRA adapters $\theta^{\text{LoRA}}_{D_{m+1}}$ and $\theta^{\text{LoRA}}_{D_{\text{old}}}$; and rather than training on multiple data mixtures, \method evaluates linear interpolations between the two LoRA adapters, yielding the proxy loss $\hat{F}(\alpha) = \frac{1}{N}\sum_{i=1}^N f_i\!\left(\theta^{\text{LoRA}}(\alpha)\right)$.

To isolate the contributions of these two approximations, we follow \citet{chen2025olmix} and define an intermediate surface $F^M(\alpha) = \frac{1}{N}\sum_{i=1}^N f_i(\theta^{\text{full}}(\alpha))$, where $\theta^{\text{full}}(\alpha) = (1-\alpha) \cdot \theta_{D_{\text{old}}} + \alpha \cdot\theta_{D_{m+1}}$ interpolates full finetuning updates instead of LoRAs. The two errors are then:
\begin{align*}
    \varepsilon_{\text{merge}} := \sup_{\alpha \in [0,1]} \left|F(\alpha) - F^M(\alpha)\right|, \quad \varepsilon_{\text{LoRA}} := \sup_{\alpha \in [0,1]} \left|F^M(\alpha) - \hat{F}(\alpha)\right|.
\end{align*}

If both approximations are exact ($\varepsilon_{\mathrm{merge}} = \varepsilon_{\mathrm{LoRA}} = 0$), then \method returns an optimal interpolation weight. When the approximations are imperfect, the following bound holds:

\begin{remark}[\method performance gap]
\label{thm:gap}
Define the true regularized objective 
$J(\alpha) = F(\alpha) + \lambda \, D_{\text{KL}}\!\bigl(E(\boldsymbol{\alpha}) \,\big\|\, \mu\bigr)$ 
and the proxy objective
$
\hat{J}(\alpha) = \hat{F}(\alpha) + \lambda \, D_{\text{KL}}\!\bigl(E(\boldsymbol{\alpha}) \,\big\|\, \mu\bigr),
$
and let $\alpha^\star \in \arg\min_{\alpha \in [0,1]} J(\alpha)$ and $\hat{\alpha} \in \arg\min_{\alpha \in [0,1]} \hat{J}(\alpha)$. Then:
\begin{align}
    J(\hat{\alpha}) - J(\alpha^\star) \;\le\; 2\!\left(\varepsilon_{\mathrm{merge}} + \varepsilon_{\mathrm{LoRA}}\right).
    \label{eq:gap}
\end{align}
\end{remark}

\begin{proof}[Proof sketch]
See Appendix \ref{app:proofs} for full proof. The following holds:
\begin{align}
    J(\hat{\alpha}) - J(\alpha^\star) \;=\; \underbrace{\big[J(\hat{\alpha}) - \hat{J}(\hat{\alpha})\big]}_{\le\, \varepsilon_{\mathrm{merge}} + \varepsilon_{\mathrm{LoRA}}} \;+\; \underbrace{\big[\hat{J}(\hat{\alpha}) - \hat{J}(\alpha^\star)\big]}_{\le\; 0} \;+\; \underbrace{\big[\hat{J}(\alpha^\star) - J(\alpha^\star)\big]}_{\le\, \varepsilon_{\mathrm{merge}} + \varepsilon_{\mathrm{LoRA}}}. \nonumber
\end{align}
The middle term is nonpositive because $\hat{\alpha}$ also minimizes $\hat{J}$. For the other two terms, the regularization terms cancel and the triangle inequality through $F^M$ gives $|F(\alpha) - \hat{F}(\alpha)| \le \varepsilon_{\mathrm{merge}} + \varepsilon_{\mathrm{LoRA}}$.
\end{proof}
We verify empirically in Figure \ref{fig:regret} that the overall approximation gap is small and non-increasing across continual learning stages.
Furthermore, $\varepsilon_{\text{merge}}$ being small is empirically supported by linear mode connectivity \citep{lmc-lth,pmlr-v162-wortsman22a}, which observes that linear interpolation does not incur large loss spikes  when the finetuned models share a base model (see Corollary~\ref{cor:merge_vanish}). 

\section{Related Work and Discussion}

\paragraph{Continual learning.} The core challenge in continual learning is catastrophic forgetting, where training on new data degrades performance on previously learned tasks~\citep{mccloskey-catastrophic}. Approaches to mitigating forgetting fall into three broad families: regularization-based methods constrain parameter updates to protect knowledge from earlier tasks \citep{kirkpatrick2017overcoming,aljundi2017memory}; replay-based methods retain or regenerate examples from previous tasks \citep{rolnick2019experience,shin2017continual}; and architecture-based methods allocate new capacity for new tasks \citep{rusu2022progressiveneuralnetworks,wang-olora}. \method is a replay-based method. 

In the context of LLMs, forgetting can manifest across pretraining, instruction tuning, and alignment stages~\citep{shi2025continual,zheng2025spurious}. Our work considers in-weights learning, which updates model parameters. A parallel line of work keeps LLM weights frozen and accumulates knowledge in-context, e.g., via soft prompts~\citep{razdaibiedina2023progressive} or modular KV-cache cartridges~\citep{eyuboglu2026cartridges}. However, the storage of in-context approaches grows with the dataset size, so amortizing knowledge from context to parameters using in-weights learning remains relevant. 

\paragraph{Data mixing.} \citet{ye2025data} established scaling laws for data mixtures, showing that downstream performance is a predictable function of mixture proportions. This empirically grounded the pipeline of training small proxy models on candidate mixtures and fitting a regression model to extrapolate to full scale \citep{liu2025regmix,chen2025olmix}. In contrast to this offline approach, several online data mixing algorithms have been proposed for pretraining based on distributionally robust optimization, including DoReMi \citep{xie2023doremi}, DoGE \citep{fan2024doge} and GRAPE \citep{fan2025grape}. \citet{chen2025aioli} showed that both classes of algorithms are instances of the same linear framework.

\paragraph{Limitations and Future Work.} Our experiments top out at 530M parameters for pretraining and midtraining and 7B for instruction tuning, leaving open how \method behaves at frontier scale (70B+ parameters). We also do not characterize how the LoRA proxy and model merging behave as the number of domains grows to 10 or 100. Future work can extend \method to reward-based objectives like RLHF \citep{ouyang2022training}, where LoRA tends to work well \citep{schulman2025lora}. More broadly, our lifecycle-unification direction suggests that other training decisions, such as learning rate schedules or training objectives, may admit similarly unified formulations.

\paragraph{Conclusion.} The various phases of language model training are artificial divisions, and data mixing algorithms should work gracefully across them in a continual setting. Existing methods fall short on two fronts: they cannot incorporate new datasets, and they rely on off-policy proxy models. We address both limitations with \method, the first data mixing algorithm to achieve state-of-the-art results across pretraining, continual midtraining, and continual instruction tuning. By exploiting LoRA and linear mode connectivity to cheaply simulate candidate mixtures, \method turns adaptive data mixing into a lightweight operation that can be repeated whenever new data arrives.

\section*{Acknowledgments}

We thank Graham Neubig, John Langford, Maxime Peyrard, Sebastian Cygert, Mayee Chen, and the NYU Computation and Psycholinguistics Lab for discussion and feedback. MYH is supported by the NSF Graduate Research Fellowship. AG is supported by the Amazon AI PhD Fellowship.

This work was supported in part through the NYU IT High Performance Computing resources, services, and staff expertise. This work was supported by the Institute of Information \& Communications Technology Planning \& Evaluation (IITP) with a grant funded by the Ministry of Science and ICT (MSIT) of the Republic of Korea in connection with the Global AI Frontier Lab International Collaborative Research. This work was also supported by the Samsung Advanced Institute of Technology (under the project Next Generation Deep Learning: From Pattern Recognition to AI) and the National Science Foundation (under NSF Awards 1922658, IIS-2239862, and IIS-2433429).

\bibliographystyle{plainnat}  
\bibliography{references} 

\vfill


\pagebreak
\appendix

\begin{figure}[t]
    \centering
    \includegraphics[width=0.9\linewidth]{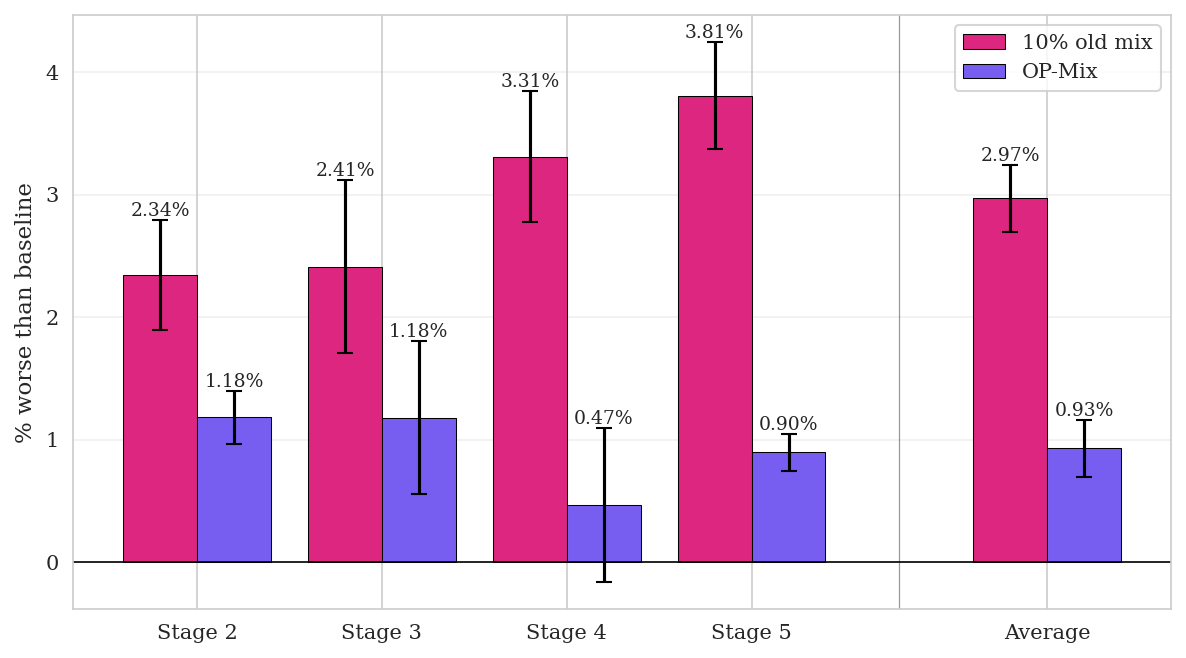}
    \caption{\method versus grid sweep. In the continual midtraining setting, \method consistently achieves regret of 1.18\% or less with respect to the optimal value (estimated by grid sweeping over mixtures). Regret does not grow as more datasets are introduced, unlike with a fixed 10\% old data mixture, where regret does grow.}
    \label{fig:regret}
\end{figure}

\begin{figure}[t]
    \centering
    \includegraphics[width=0.9\linewidth]{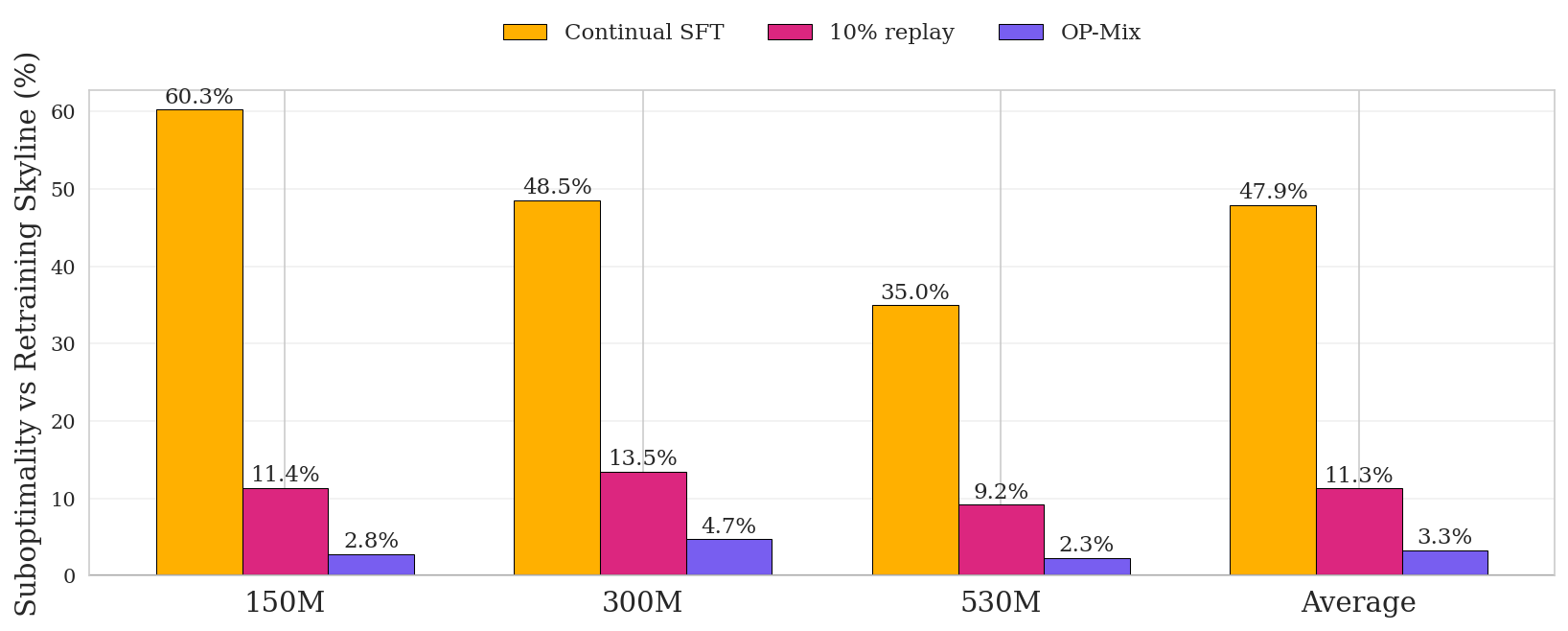}
    \caption{\method versus retraining. In the continual midtraining setting, \method nearly matches the performance of retraining, indicating that it successfully mitigates catastrophic forgetting on previously seen datasets.}
    \label{fig:vs-retraining}
\end{figure}

\begin{figure}[t]
    \centering
    \includegraphics[width=0.9\linewidth]{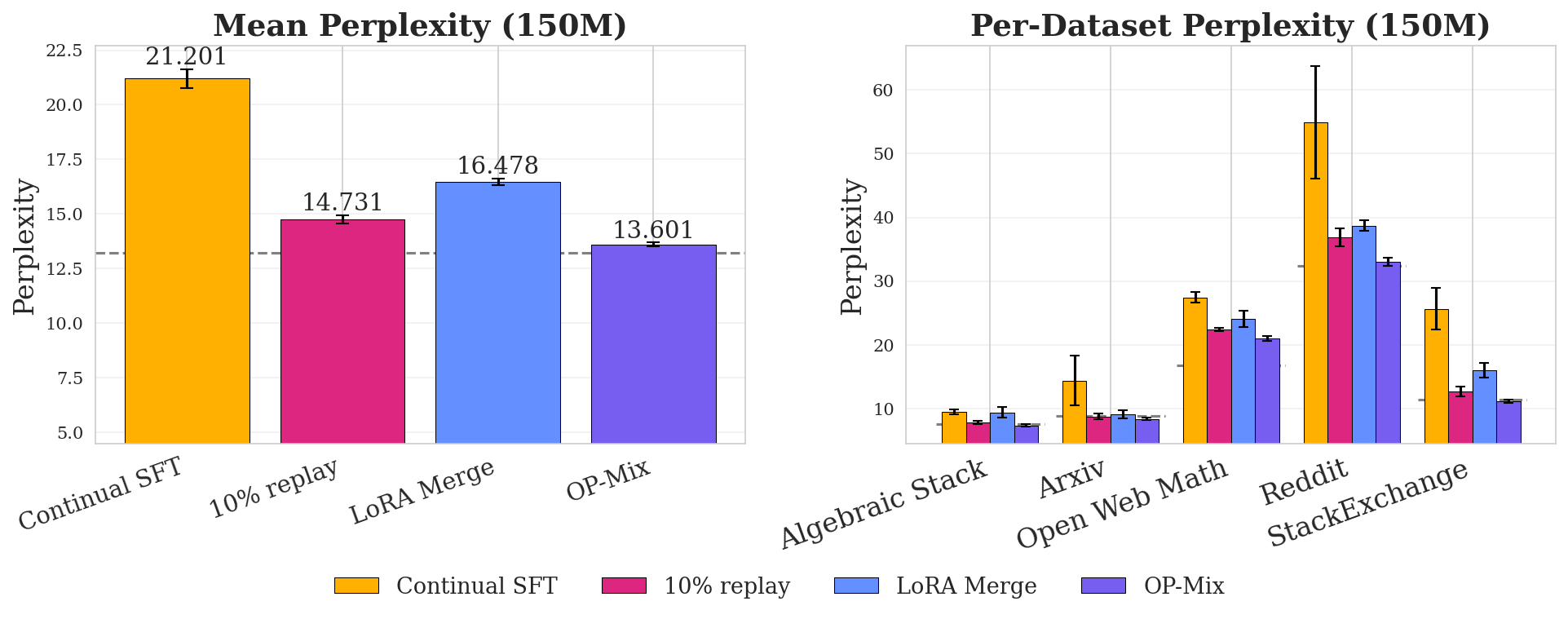}
    \caption{LoRA merging only is not sufficient. Simply merging in trained LoRA adapters (grey) without finetuning underperforms \method (purple).}
    \label{fig:lora-merge}
\end{figure}

\begin{table}[t]
\centering
\small
\setlength{\tabcolsep}{4pt}
\begin{tabular}{l l c c c c c c c c c c}
\toprule
Size & Algorithm & ARC-E & ARC-C & BoolQ & CSQA & HSwag & OBQA & PIQA & WGrd & MMLU & Avg \\
\midrule
\multirow{4}{*}{150M}
 & ERM      & 33.57 & 20.85 & 56.72 & 19.49 & 27.08 & \textbf{24.60} & 58.27 & \textbf{51.04} & 23.01 & 34.96 \\
 & MergeMix & 33.26 & \textbf{21.76} & 60.05 & \textbf{19.52} & 27.37 & 23.93 & 57.85 & 50.72 & 23.00 & 35.28 \\
 & OLMix    & 33.66 & \textbf{21.76} & \textbf{61.27} & 19.49 & 27.53 & 24.00 & \textbf{58.85} & 51.01 & 23.05 & \textbf{35.62} \\
 & OP-Mix   & \textbf{33.68} & 21.16 & 59.92 & 19.49 & \textbf{27.62} & 23.93 & 58.34 & 50.49 & \textbf{23.07} & \textit{35.30} \\
\midrule
\multirow{4}{*}{300M}
 & ERM      & 34.82 & 21.53 & \textbf{60.18} & \textbf{19.60} & 28.29 & 25.33 & 59.45 & \textbf{51.33} & \textbf{23.06} & \textit{35.96} \\
 & MergeMix & 35.66 & 22.10 & 57.05 & 19.66 & 28.98 & 25.40 & 60.01 & 50.20 & 23.10 & 35.80 \\
 & OLMix    & 35.90 & 21.56 & 54.10 & 19.49 & 28.80 & \textbf{25.80} & 60.39 & 50.59 & 23.04 & 35.52 \\
 & OP-Mix   & \textbf{35.96} & \textbf{22.21} & 57.21 & 19.52 & \textbf{28.96} & 25.67 & \textbf{61.37} & 50.93 & \textbf{23.06} & \textbf{36.10} \\
\midrule
\multirow{4}{*}{530M}
 & ERM      & 35.24 & 21.96 & 58.83 & 19.52 & 29.03 & 25.33 & 60.12 & 51.43 & 23.03 & 36.06 \\
 & MergeMix & \textbf{36.81} & \textbf{22.30} & 54.76 & \textbf{19.57} & 29.99 & \textbf{26.47} & \textbf{61.95} & 51.35 & \textbf{23.25} & 36.27 \\
 & OLMix    & 36.41 & 21.99 & 59.04 & 19.44 & 29.54 & 26.13 & 61.43 & 50.83 & 23.12 & \textit{36.43} \\
 & OP-Mix   & 36.59 & 21.99 & \textbf{61.45} & \textbf{19.57} & \textbf{30.13} & \textbf{26.47} & 61.15 & \textbf{51.78} & 23.10 & \textbf{36.91} \\
\bottomrule \\
\end{tabular}
\caption{Downstream zero-shot accuracy across ARC-Easy, ARC-Challenge \citep{Clark2018ThinkYH}, BoolQ \citep{clark2019boolq}, CommonsenseQA \citep{talmor-etal-2019-commonsenseqa}, HellaSwag \citep{zellers2019hellaswag}, OpenBookQA \citep{OpenBookQA2018}, PIQA \citep{Bisk2020}, WinoGrande \citep{sakaguchi2019winogrande}, and MMLU \citep{hendryckstest2021}, alongside their unweighted average. We ran evaluations using \texttt{lm-eval-harness} \citep{eval-harness}. Each block reports results for one model size. \textbf{Bold} marks the best result per column within each model size; \textit{italics} mark the second best average score.}
\label{tab:downstream}
\end{table}

\pagebreak
\FloatBarrier

\section{Reproducibility}
\label{app:reproducibility}


\subsection{Shared \method\ configuration}
\label{app:shared-config}

Across all three settings, \method\ uses the same high-level structure (Algorithm~\ref{alg:opmix}) and the same regression and solver. Shared choices are listed in Table~\ref{tab:shared-opm}.

\begin{table}[h]
\centering
\small
\begin{tabular}{@{}ll@{}}
\toprule
Setting & Value \\
\midrule
LoRA rank $r$ & 16 \\
LoRA scaling $\alpha$ & 32 \\
LoRA learning rate & $2\times$ base finetuning LR \\
LoRA warmup & 0 steps \\
Regression form & log-linear per eval domain (fit via \texttt{cvxpy}) \\
Mix optimizer & \texttt{cvxpy} \citep{diamond2016cvxpy} on the reduced simplex $\triangle^{K}$ \\
Prior $\mu$ & uniform over the expanded domain set \\
KL regularization $\lambda$ & 0.05 \\
Base dtype & bfloat16 \\
Optimizer (base and LoRA) & AdamW, weight decay $0.01$ \\
LR schedule (base training) & WSD (warmup--stable--decay), 1000 decay steps \\
\bottomrule
\end{tabular}
\caption{Hyperparameters shared by \method\ across pretraining, continual midtraining, and continual instruction tuning.}
\label{tab:shared-opm}
\end{table}

The proxy construction differs slightly between settings. For pretraining, we Dirichlet-sample $P$ proxy mix vectors over $\{D_{\text{old}}, D_{m+1}, \dots, D_{m+K}\}$. For continual midtraining and continual instruction tuning, where $K=1$ new domain arrives per stage, we replace Dirichlet sampling with a deterministic 9-point grid over the old/new axis at $\alpha_{\text{new}} \in \{0.1, 0.2, \dots, 0.9\}$. In both continual settings the old- and new-domain LoRAs are trained with a \emph{10/90} split (old probe mixes 10\% of the new domain into the old mix; new probe mixes 10\% of the old mix into the new domain) rather than one-hot specialization; we found this to prevent overestimation of forgetting while still being mathematically correct.

\subsection{Pretraining}
\label{app:pretrain}

We pretrain from configuration-initialized OLMo models at three sizes on a five-domain mix of Algebraic Stack, ArXiv, c4, Reddit, and StackExchange, all tokenized with the DataDecide Dolma v1.5 tokenizer \cite{magnusson2025datadecide}. The model ladder uses the \texttt{allenai/DataDecide-c4-\{150M, 300M, 530M\}} architectures, initialized from config only (random weights). Per-size hyperparameters are in Table~\ref{tab:pretrain-hp}.

\begin{table}[h]
\centering
\small
\begin{tabular}{@{}lccc@{}}
\toprule
 & 150M & 300M & 530M \\
\midrule
Base architecture & DataDecide-c4-150M & DataDecide-c4-300M & DataDecide-c4-530M \\
Total steps $R$ & 50{,}000 & 50{,}000 & 80{,}000 \\
Global batch size & 32 & 64 & 64 \\
Sequence length & 1024 & 1024 & 1024 \\
Learning rate & $5\times10^{-4}$ & $4\times10^{-4}$ & $3\times10^{-4}$ \\
Warmup steps & 1{,}000 & 5{,}000 & 8{,}000 \\
ERM warm start & 0.20$R$ & 0.20$R$ & 0.20$R$ \\
LoRA steps & 5{,}000 & 5{,}000 & 5{,}000 \\
Proxy points $P$ & 20 & 20 & 20 \\
\bottomrule
\end{tabular}
\caption{Pretraining hyperparameters for \method\ and the ERM baseline. MergeMix uses the same prefix, proxy count, and probe length, but trains full-parameter proxies instead of LoRAs. OLMix uses a separate 20M-parameter proxy model (Table~\ref{tab:proxy-baseline}).}
\label{tab:pretrain-hp}
\end{table}

The ERM prefix is trained on the uniform mix $(1/5, 1/5, 1/5, 1/5, 1/5)$. After the prefix, each of the 5 LoRA probes is trained on a \emph{90\%-on-its-domain / 10\%-on-the-old-mix} partition so the span of the 5 adapters covers the full simplex interior. We then build $P=20$ Dirichlet-sampled interpolation merges, evaluate each on the held-out shards of all 5 domains, fit the log-linear regression, and train for the remaining $0.8R$ steps on the fitted mix.


\begin{table}[h]
\centering
\small
\begin{tabular}{@{}ll@{}}
\toprule
OLMix proxy (20M) & Value \\
\midrule
Base architecture & DataDecide-c4-20M \\
Total steps & 10{,}000 \\
Batch size & 32 \\
Learning rate & $1\times10^{-3}$ \\
Warmup steps & 5{,}000 \\
Proxy mixtures & 64 Dirichlet samples \\
\bottomrule
\end{tabular}
\caption{Proxy configuration for the OLMix baseline in pretraining.}
\label{tab:proxy-baseline}
\end{table}

\subsection{Continual midtraining}
\label{app:continual}

We start from the pretrained DataDecide-c4 checkpoints from pretraining and continually finetune on the same five domains as pretraining, introduced one stage at a time. To control for ordering effects we run all five cyclic permutations \texttt{ord0} through \texttt{ord4} (Table~\ref{tab:ordering}).

\begin{table}[h]
\centering
\small
\begin{tabular}{@{}ll@{}}
\toprule
Ordering & Stage sequence \\
\midrule
\texttt{ord0} & algebraic\_stack $\to$ arxiv $\to$ open\_web\_math $\to$ reddit $\to$ stackexchange \\
\texttt{ord1} & arxiv $\to$ open\_web\_math $\to$ reddit $\to$ stackexchange $\to$ algebraic\_stack \\
\texttt{ord2} & open\_web\_math $\to$ reddit $\to$ stackexchange $\to$ algebraic\_stack $\to$ arxiv \\
\texttt{ord3} & reddit $\to$ stackexchange $\to$ algebraic\_stack $\to$ arxiv $\to$ open\_web\_math \\
\texttt{ord4} & stackexchange $\to$ algebraic\_stack $\to$ arxiv $\to$ open\_web\_math $\to$ reddit \\
\bottomrule
\end{tabular}
\caption{Cyclic orderings of the five midtraining domains. Results are averaged across these five orderings.}
\label{tab:ordering}
\end{table}

Per-stage hyperparameters are in Table~\ref{tab:continual-hp}. Every stage consists of two LoRA probes (old and new, with the 10/90 split above), a 9-point 1-D proxy scan, a \texttt{cvxpy} mix fit on the reduced $(\alpha_{\text{old}}, \alpha_{\text{new}})$ simplex, expansion of $\alpha_{\text{old}}^\star$ onto the previous stage's mix, and a full-model finetune on the expanded mix.

\begin{table}[h]
\centering
\small
\begin{tabular}{@{}lccc@{}}
\toprule
 & 150M & 300M & 530M \\
\midrule
Base model & DataDecide-c4-150M & DataDecide-c4-300M & DataDecide-c4-530M \\
Steps per stage $R$ & 10{,}000 & 12{,}500 & 15{,}000 \\
Batch size & 32 & 32 & 32 \\
Sequence length & 1024 & 1024 & 1024 \\
Learning rate & $5\times10^{-4}$ & $4\times10^{-4}$ & $3\times10^{-4}$ \\
Warmup steps (stage 1) & 1{,}000 & 1{,}250 & 1{,}500 \\
Warmup steps (stage $k\ge 2$) & 0 & 0 & 0 \\
LoRA steps & 2{,}500 & 2{,}500 & 2{,}500 \\
Proxy grid & $\alpha_{\text{new}}\in\{0.1,\dots,0.9\}$ & $\alpha_{\text{new}}\in\{0.1,\dots,0.9\}$ & $\alpha_{\text{new}}\in\{0.1,\dots,0.9\}$ \\
Old probe new-weight & 0.1 & 0.1 & 0.1 \\
New probe new-weight & 0.9 & 0.9 & 0.9 \\
\bottomrule
\end{tabular}
\caption{Per-stage hyperparameters for continual midtraining. Stage 1 is a single-domain finetune; stages 2--5 run the full \method\ pipeline on top of the previous stage's checkpoint.}
\label{tab:continual-hp}
\end{table}

Baselines inherit the same $R$, batch size, sequence length, learning rate, and warmup schedule. The ``10\% data replay'' baseline fixes $\alpha_{\text{old}}=0.1$ at every stage and expands onto the previous mix using the same expansion map $E$ as \method. ``Retrain'' trains from the original base model for $k\cdot R$ steps on the uniform mix over the $k$ domains seen so far.

\subsection{Continual instruction tuning}
\label{app:sdft}

We use Qwen2.5-7B-Instruct \citep{yang2024qwen2} as the base, and reuse the three domains and ordering of \citet{shenfeld2026selfdistillation}: Tool Use (4{,}046 examples) $\to$ Science (1{,}233 examples) $\to$ Medical (10{,}000 examples). Each stage is one epoch over its dataset (capped at 10{,}000 examples). We evaluate with the SDFT accuracy metric of \citet{shenfeld2026selfdistillation}, averaged across the domains seen so far.

\begin{table}[h]
\centering
\small
\begin{tabular}{@{}ll@{}}
\toprule
 & Value \\
\midrule
Base model & Qwen2.5-7B-Instruct \\
Num train epochs (full finetune) & 1 \\
Max train samples per stage & 10{,}000 \\
Per-device train batch size & 4 \\
Prompts per SDFT batch & 32 \\
Learning rate (full finetune) & $2\times10^{-5}$ \\
Learning rate (LoRA probe) & $4\times10^{-5}$ \\
LoRA probe max steps & 256 \\
SDFT ref-model mixup $\alpha$ & 0.01 \\
SDFT $\beta$ & 0.0 \\
Proxy grid & $\alpha_{\text{new}}\in\{0.1,\dots,0.9\}$ \\
GPU constraint & H200 \\
\bottomrule
\end{tabular}
\caption{Hyperparameters for the Qwen2.5-7B-Instruct continual instruction tuning experiments. The same settings are used for both the SFT and SDFT variants; SFT simply disables the SDFT-specific options. The LoRA probe is trained for a fixed 256 optimizer steps (short relative to the $\sim\!2{,}500$-step midtraining LoRA) because the instruction datasets here are small.}
\label{tab:sdft-hp}
\end{table}

The  difference between SFT and SDFT \citep{shenfeld2026selfdistillation} is the training objective: SFT uses cross-entropy against the dataset targets, while SDFT replaces the targets with reverse KL-divergence to a teacher model's distribution. In the case of SDFT, the teacher is a moving average variant of the student that also receives the correct answer. In any case, \method\ is applied identically on top of either objective: it only chooses the data-mix weights fed into the training loop, so ``SFT + \method'' and ``SDFT + \method'' use exactly the proxy, regression, and solver pipeline of Table~\ref{tab:shared-opm}, just with the respective loss.

\paragraph{Compute.} All experiments run on a cluster with a mix of A100, H100, L40S, and H200 GPUs. A nice property of LoRA is that it allows on-policy proxies to run on heterogeneous compute: for example, we can run pretraining on an H200 but run proxies on an L40S, which has less than a third of the GPU VRAM.



\paragraph{Seeds and variance.} Continual midtraining configuration is run for a single seed per \{model size, ordering\} cell; variance in the continual setting is instead quantified across the five cyclic orderings. Pretraining and continual instruction tuning results are both averaged across seeds $s \in \{42, 43, 44\}$.

\section{Theory}
\label{app:proofs}

Consider one continual step of Algorithm~\ref{alg:opmix}. The current stage has previous domains $\{D_1,\dots,D_m\}$, over which we played the mixture $p_{t-1}\in\triangle^{m-1}$, and receives new domains $\{D_{m+1},\dots,D_{m+K}\}$. Let
\[
\boldsymbol{\alpha} = (\alpha_{\text{old}}, \alpha_{m+1}, \dots, \alpha_{m+K}) \in \triangle^{K}
\]
denote the reduced-simplex weights used by \method, and let $E:\triangle^K \to \triangle^{m+K-1}$ be the mixture expansion map from Algorithm~\ref{alg:opmix}. We write $\theta_{\text{base}}$ for the current base model, and $\theta^{\text{train}}(\boldsymbol{\alpha})$ for the model obtained by continuing training from $\theta_{\text{base}}$ on the expanded mixture $E(\boldsymbol{\alpha})$.

For the proxy construction, let $\theta^{\text{full}}_{\text{old}}$ be the result of full finetuning on the old-data mixture $p_{t-1}$, and let $\theta^{\text{full}}_{D_{m+k}}$ be the result of full finetuning on $D_{m+k}$, all starting from $\theta_{\text{base}}$. Likewise, let $\theta^{\text{LoRA}}_{\text{old}}$ and $\theta^{\text{LoRA}}_{D_{m+k}}$ be the corresponding LoRA-adapted models produced by Algorithm~\ref{alg:opmix}, with the adapters applied to $\theta_{\text{base}}$. We then define the merged full-model proxy and merged LoRA proxy:
\begin{align}
    \theta^M(\boldsymbol{\alpha}) &:= \alpha_{\text{old}} \, \theta^{\text{full}}_{\text{old}} + \sum_{k=1}^{K} \alpha_{m+k} \, \theta^{\text{full}}_{D_{m+k}}, \\
    \hat{\theta}(\boldsymbol{\alpha}) &:= \alpha_{\text{old}} \, \theta^{\text{LoRA}}_{\text{old}} + \sum_{k=1}^{K} \alpha_{m+k} \, \theta^{\text{LoRA}}_{D_{m+k}}.
\end{align}
Because the coefficients in $\boldsymbol{\alpha}$ sum to one and every model above is trained from the same $\theta_{\text{base}}$, these expressions are convex combinations of the corresponding parameter updates.

The three loss surfaces are therefore
\begin{align}
    F(\boldsymbol{\alpha}) &:= \frac{1}{N}\sum_{j=1}^{N} f_j\!\left(\theta^{\text{train}}(\boldsymbol{\alpha})\right), \\
    F^M(\boldsymbol{\alpha}) &:= \frac{1}{N}\sum_{j=1}^{N} f_j\!\left(\theta^M(\boldsymbol{\alpha})\right), \\
    \hat{F}(\boldsymbol{\alpha}) &:= \frac{1}{N}\sum_{j=1}^{N} f_j\!\left(\hat{\theta}(\boldsymbol{\alpha})\right).
\end{align}

The regularized objectives are
\begin{align}
    J(\boldsymbol{\alpha}) &:= F(\boldsymbol{\alpha}) + \lambda \, D_{\text{KL}}\!\bigl(E(\boldsymbol{\alpha}) \,\big\|\, \mu\bigr), \\
    \hat{J}(\boldsymbol{\alpha}) &:= \hat{F}(\boldsymbol{\alpha}) + \lambda \, D_{\text{KL}}\!\bigl(E(\boldsymbol{\alpha}) \,\big\|\, \mu\bigr),
\end{align}
with optimizers
\begin{align}
    \boldsymbol{\alpha}^\star &:= \arg\min_{\boldsymbol{\alpha} \in \triangle^K} J(\boldsymbol{\alpha}), \\
    \hat{\boldsymbol{\alpha}} &:= \arg\min_{\boldsymbol{\alpha} \in \triangle^K} \hat{J}(\boldsymbol{\alpha}).
\end{align}

The two approximation errors are
\begin{align}
    \varepsilon_{\text{merge}} &:= \sup_{\boldsymbol{\alpha} \in \triangle^K} \left|F(\boldsymbol{\alpha}) - F^M(\boldsymbol{\alpha})\right|,\\
    \varepsilon_{\text{LoRA}} &:= \sup_{\boldsymbol{\alpha} \in \triangle^K} \left|F^M(\boldsymbol{\alpha}) - \hat{F}(\boldsymbol{\alpha})\right|.
\end{align}

\begin{assumption}[Idealized proxy optimization]
\label{assump:exact}
We assume: (i) the fitted regression surface used in Algorithm~\ref{alg:opmix} recovers the merged-LoRA loss surface exactly, so the optimization step is equivalent to minimizing $\hat{J}$ over $\triangle^K$; and (ii) both $J$ and $\hat{J}$ are minimized exactly. This isolates the structural approximation errors induced by LoRA and model merging from finite-sample regression error, numerical optimization error, and proxy-training budget mismatch.
\end{assumption}

\subsection{Exact Recovery}

\begin{proposition}[Exact recovery]
\label{prop:exact}
Under Assumption~\ref{assump:exact}, if $\varepsilon_{\mathrm{merge}} = \varepsilon_{\mathrm{LoRA}} = 0$, then $\hat{F} = F$ on $\triangle^K$ and every minimizer of $\hat{J}$ is also a minimizer of $J$. In particular,
\[
\hat{\boldsymbol{\alpha}} \in \arg\min_{\boldsymbol{\alpha} \in \triangle^K} J(\boldsymbol{\alpha}).
\]
\end{proposition}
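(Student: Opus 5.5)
The plan is to argue pointwise on $\triangle^K$, using the triangle inequality through the intermediate surface $F^M$, and then pass the equality of surfaces to the minimizers.

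First, fix an arbitrary $\boldsymbol{\alpha}\in\triangle^K$. By the definitions of the two suprema,
\[
|F(\boldsymbol{\alpha})-\hat F(\boldsymbol{\alpha})| \le |F(\boldsymbol{\alpha})-F^M(\boldsymbol{\alpha})| + |F^M(\boldsymbol{\alpha})-\hat F(\boldsymbol{\alpha})| \le \varepsilon_{\mathrm{merge}}+\varepsilon_{\mathrm{LoRA}} = 0 .
\]
Hence $F(\boldsymbol{\alpha})=\hat F(\boldsymbol{\alpha})$ for every $\boldsymbol{\alpha}$, which gives $\hat F = F$ on $\triangle^K$.

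Second, note that $J$ and $\hat J$ carry the identical regularizer $\lambda D_{\mathrm{KL}}(E(\boldsymbol{\alpha})\,\|\,\mu)$. This term depends only on $\boldsymbol{\alpha}$, $\lambda$, $\mu$ and the fixed map $E$, so it is the same function on both sides. It follows that $J-\hat J = F-\hat F \equiv 0$, and therefore $J=\hat J$ as functions on $\triangle^K$. One minor check is needed here: the KL term may take the value $+\infty$ on the boundary if $\mu$ has zero entries. Since the paper uses a uniform $\mu$, the term is finite, and in any case adding the same extended-real function to both sides preserves equality.

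Third, take $\hat{\boldsymbol{\alpha}}$ to be any minimizer of $\hat J$. By Assumption~\ref{assump:exact}(i)--(ii), this is exactly what the algorithm returns. For every $\boldsymbol{\alpha}$ we have
\[
J(\hat{\boldsymbol{\alpha}})=\hat J(\hat{\boldsymbol{\alpha}})\le \hat J(\boldsymbol{\alpha}) = J(\boldsymbol{\alpha}),
\]
so $\hat{\boldsymbol{\alpha}}\in\arg\min J$. Conversely, the argmin sets coincide. No step is a real obstacle. The only point needing care is the role of Assumption~\ref{assump:exact}: it identifies the algorithm's output with a true minimizer of $\hat J$, without regression or solver error. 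Existence of minimizers is not needed, since the claim is only about every minimizer of $\hat J$, if any exist.
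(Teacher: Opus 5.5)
Your proof is correct and follows essentially the same route as the paper. The paper chains the pointwise equalities $F = F^M = \hat F$ (which is exactly what your triangle-inequality bound gives when both suprema are zero), uses the shared KL regularizer to get $J = \hat J$, and concludes that the argmin sets coincide. Your extra remarks on a possibly infinite KL term and on not needing minimizers to exist are harmless refinements that the paper does not make.
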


\begin{proof}
If $\varepsilon_{\mathrm{merge}} = 0$, then $F(\boldsymbol{\alpha}) = F^M(\boldsymbol{\alpha})$ for all $\boldsymbol{\alpha} \in \triangle^K$, so the merged full-model proxy matches the loss of training on the expanded mixture. If additionally $\varepsilon_{\mathrm{LoRA}} = 0$, then $F^M(\boldsymbol{\alpha}) = \hat{F}(\boldsymbol{\alpha})$ for all $\boldsymbol{\alpha}$, so the merged LoRA proxy is also exact. Hence $\hat{F}(\boldsymbol{\alpha}) = F(\boldsymbol{\alpha})$ on $\triangle^K$, which implies $\hat{J}(\boldsymbol{\alpha}) = J(\boldsymbol{\alpha})$ because both objectives share the same regularizer. Therefore $\arg\min \hat{J} = \arg\min J$, and in particular any optimizer $\hat{\boldsymbol{\alpha}}$ of the proxy objective is also an optimizer of the true objective.
\end{proof}

This is the analog of Lemma~2 of \citet{chen2025olmix}, which shows exact recovery when the reused mixture is itself optimal. For \method, the corresponding ideal condition is that the reduced-simplex proxy surface exactly matches the true objective after expansion by $E$.

\subsection{Performance Gap Bound}

We first establish a uniform approximation lemma, then use it to prove Theorem~\ref{thm:gap}.

\begin{lemma}[Uniform proxy error]
\label{lem:uniform}
For any $\boldsymbol{\alpha} \in \triangle^K$:
\begin{align}
    \left|F(\boldsymbol{\alpha}) - \hat{F}(\boldsymbol{\alpha})\right| \;\le\; \varepsilon_{\mathrm{merge}} + \varepsilon_{\mathrm{LoRA}}.
\end{align}
\end{lemma}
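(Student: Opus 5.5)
The plan is to insert the intermediate surface $F^M$ and apply the triangle inequality pointwise. Fix an arbitrary $\boldsymbol{\alpha} \in \triangle^K$. Adding and subtracting $F^M(\boldsymbol{\alpha})$ gives
\[
\left|F(\boldsymbol{\alpha}) - \hat{F}(\boldsymbol{\alpha})\right| \;\le\; \left|F(\boldsymbol{\alpha}) - F^M(\boldsymbol{\alpha})\right| + \left|F^M(\boldsymbol{\alpha}) - \hat{F}(\boldsymbol{\alpha})\right|.
\]

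Next I would bound each term by the corresponding supremum. The first term is at most $\sup_{\boldsymbol{\alpha}' \in \triangle^K}|F(\boldsymbol{\alpha}') - F^M(\boldsymbol{\alpha}')| = \varepsilon_{\mathrm{merge}}$. The second term is at most $\sup_{\boldsymbol{\alpha}' \in \triangle^K}|F^M(\boldsymbol{\alpha}') - \hat{F}(\boldsymbol{\alpha}')| = \varepsilon_{\mathrm{LoRA}}$. Both hold because $\boldsymbol{\alpha}$ lies in the set over which each supremum is taken. Summing the two bounds gives the claim.

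There is no substantive obstacle; the only point to check is bookkeeping. The suprema defining $\varepsilon_{\mathrm{merge}}$ and $\varepsilon_{\mathrm{LoRA}}$ here range over the full reduced simplex $\triangle^K$, the same domain on which $F$, $F^M$ and $\hat{F}$ are defined, so the pointwise bound applies everywhere. If either supremum is $+\infty$ the inequality holds trivially, so no finiteness assumption is needed. This also means the lemma does not rely on Assumption~\ref{assump:exact}, which only enters later when the lemma is combined with optimality of $\hat{\boldsymbol{\alpha}}$ for the gap bound.
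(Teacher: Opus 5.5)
Your proposal is correct and matches the paper's proof: both insert the intermediate surface $F^M$, apply the triangle inequality pointwise, and bound each term by its supremum over $\triangle^K$ to get $\varepsilon_{\mathrm{merge}} + \varepsilon_{\mathrm{LoRA}}$. Your side remarks are also accurate: an infinite supremum makes the bound trivial, and the lemma does not use Assumption~\ref{assump:exact}.
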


\begin{proof}
By the triangle inequality, introducing the intermediate surface $F^M$:
\begin{align}
    \left|F(\boldsymbol{\alpha}) - \hat{F}(\boldsymbol{\alpha})\right| &= \left|F(\boldsymbol{\alpha}) - F^M(\boldsymbol{\alpha}) + F^M(\boldsymbol{\alpha}) - \hat{F}(\boldsymbol{\alpha})\right| \nonumber \\
    &\le \left|F(\boldsymbol{\alpha}) - F^M(\boldsymbol{\alpha})\right| + \left|F^M(\boldsymbol{\alpha}) - \hat{F}(\boldsymbol{\alpha})\right| \nonumber \\
    &\le \sup_{\boldsymbol{\alpha}' \in \triangle^K} \left|F(\boldsymbol{\alpha}') - F^M(\boldsymbol{\alpha}')\right| + \sup_{\boldsymbol{\alpha}' \in \triangle^K} \left|F^M(\boldsymbol{\alpha}') - \hat{F}(\boldsymbol{\alpha}')\right| \nonumber \\
    &= \varepsilon_{\mathrm{merge}} + \varepsilon_{\mathrm{LoRA}}. \nonumber \qedhere
\end{align}
\end{proof}

\setcounter{savedtheorem}{\value{theorem}}                                                
\setcounter{theorem}{0}

\begin{remark}[\method performance gap, full proof]
Under Assumption~\ref{assump:exact}:
\begin{align}
    J(\hat{\boldsymbol{\alpha}}) - J(\boldsymbol{\alpha}^\star) \;\le\; 2\!\left(\varepsilon_{\mathrm{merge}} + \varepsilon_{\mathrm{LoRA}}\right). \tag{\ref{eq:gap}}
\end{align}
\end{remark}

\setcounter{theorem}{\value{savedtheorem}}

\begin{proof}
We decompose the objective gap by adding and subtracting $\hat{J}$:
\begin{align}
    J(\hat{\boldsymbol{\alpha}}) - J(\boldsymbol{\alpha}^\star) &= \big[J(\hat{\boldsymbol{\alpha}}) - \hat{J}(\hat{\boldsymbol{\alpha}})\big] + \big[\hat{J}(\hat{\boldsymbol{\alpha}}) - \hat{J}(\boldsymbol{\alpha}^\star)\big] + \big[\hat{J}(\boldsymbol{\alpha}^\star) - J(\boldsymbol{\alpha}^\star)\big]. \label{eq:decomp}
\end{align}

\textbf{Middle term.} Since $\hat{\boldsymbol{\alpha}} = \arg\min_{\boldsymbol{\alpha} \in \triangle^K} \hat{J}(\boldsymbol{\alpha})$ and $\boldsymbol{\alpha}^\star \in \triangle^K$:
\begin{align}
    \hat{J}(\hat{\boldsymbol{\alpha}}) - \hat{J}(\boldsymbol{\alpha}^\star) \;\le\; 0. \label{eq:middle}
\end{align}

\textbf{First term.} Note that $J(\boldsymbol{\alpha}) - \hat{J}(\boldsymbol{\alpha}) = F(\boldsymbol{\alpha}) - \hat{F}(\boldsymbol{\alpha})$ for any $\boldsymbol{\alpha}$, since the regularization terms are identical in $J$ and $\hat{J}$. Applying Lemma~\ref{lem:uniform}:
\begin{align}
    J(\hat{\boldsymbol{\alpha}}) - \hat{J}(\hat{\boldsymbol{\alpha}}) = F(\hat{\boldsymbol{\alpha}}) - \hat{F}(\hat{\boldsymbol{\alpha}}) \;\le\; \left|F(\hat{\boldsymbol{\alpha}}) - \hat{F}(\hat{\boldsymbol{\alpha}})\right| \;\le\; \varepsilon_{\mathrm{merge}} + \varepsilon_{\mathrm{LoRA}}. \label{eq:first}
\end{align}

\textbf{Third term.} By the same reasoning:
\begin{align}
    \hat{J}(\boldsymbol{\alpha}^\star) - J(\boldsymbol{\alpha}^\star) = \hat{F}(\boldsymbol{\alpha}^\star) - F(\boldsymbol{\alpha}^\star) \;\le\; \left|\hat{F}(\boldsymbol{\alpha}^\star) - F(\boldsymbol{\alpha}^\star)\right| \;\le\; \varepsilon_{\mathrm{merge}} + \varepsilon_{\mathrm{LoRA}}. \label{eq:third}
\end{align}

Substituting \eqref{eq:middle}, \eqref{eq:first}, and \eqref{eq:third} into \eqref{eq:decomp}:
\begin{align}
    J(\hat{\boldsymbol{\alpha}}) - J(\boldsymbol{\alpha}^\star) &\;\le\; (\varepsilon_{\mathrm{merge}} + \varepsilon_{\mathrm{LoRA}}) + 0 + (\varepsilon_{\mathrm{merge}} + \varepsilon_{\mathrm{LoRA}}) \nonumber \\
    &\;=\; 2(\varepsilon_{\mathrm{merge}} + \varepsilon_{\mathrm{LoRA}}). \nonumber \qedhere
\end{align}
\end{proof}

\subsection{Characterizing the Approximation Errors}

The bound in Theorem~\ref{thm:gap} reduces the analysis of \method to bounding $\varepsilon_{\mathrm{merge}}$ and $\varepsilon_{\mathrm{LoRA}}$ separately. We now provide Lipschitz-based characterizations of each.

\begin{proposition}[LoRA approximation bound]
\label{prop:lora_lip}
If each downstream metric $f_j$ is $L_j$-Lipschitz in model parameters with respect to the Frobenius norm, i.e., $|f_j(\theta) - f_j(\theta')| \le L_j \|\theta - \theta'\|_F$ for all $\theta, \theta'$, then with $L = \frac{1}{N}\sum_{j=1}^N L_j$:
\begin{align}
    \varepsilon_{\mathrm{LoRA}} \;\le\; L \cdot \max\!\Biggl\{
        \left\|\theta^{\mathrm{full}}_{\mathrm{old}} - \theta^{\mathrm{LoRA}}_{\mathrm{old}}\right\|_F,\;
        \max_{k \in [K]} \left\|\theta^{\mathrm{full}}_{D_{m+k}} - \theta^{\mathrm{LoRA}}_{D_{m+k}}\right\|_F
    \Biggr\}.
    \label{eq:lora_lip_app}
\end{align}
\end{proposition}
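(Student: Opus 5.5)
The plan is to fix an arbitrary $\boldsymbol{\alpha} \in \triangle^K$ and bound $|F^M(\boldsymbol{\alpha}) - \hat{F}(\boldsymbol{\alpha})|$ pointwise by the right-hand side of \eqref{eq:lora_lip_app}. Since that bound will not depend on $\boldsymbol{\alpha}$, taking the supremum then gives the claim for $\varepsilon_{\mathrm{LoRA}}$.

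First, I will expand both surfaces by their definitions as averages over the $N$ metrics and apply the triangle inequality:
\begin{align*}
|F^M(\boldsymbol{\alpha}) - \hat{F}(\boldsymbol{\alpha})| \le \frac{1}{N}\sum_{j=1}^N |f_j(\theta^M(\boldsymbol{\alpha})) - f_j(\hat{\theta}(\boldsymbol{\alpha}))|.
\end{align*}
The Lipschitz hypothesis bounds each summand by $L_j \|\theta^M(\boldsymbol{\alpha}) - \hat{\theta}(\boldsymbol{\alpha})\|_F$. This parameter distance is common to all $j$, so it factors out and the average of the $L_j$ collapses to $L$.

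Next, I will bound the parameter distance. Subtracting the two definitions term by term gives
\begin{align*}
\theta^M(\boldsymbol{\alpha}) - \hat{\theta}(\boldsymbol{\alpha}) = \alpha_{\mathrm{old}}\bigl(\theta^{\mathrm{full}}_{\mathrm{old}} - \theta^{\mathrm{LoRA}}_{\mathrm{old}}\bigr) + \sum_{k=1}^K \alpha_{m+k}\bigl(\theta^{\mathrm{full}}_{D_{m+k}} - \theta^{\mathrm{LoRA}}_{D_{m+k}}\bigr).
\end{align*}
By the triangle inequality and nonnegativity of the weights, the Frobenius norm of this difference is at most a convex combination of the individual norms $\|\theta^{\mathrm{full}}_{\mathrm{old}} - \theta^{\mathrm{LoRA}}_{\mathrm{old}}\|_F$ and $\|\theta^{\mathrm{full}}_{D_{m+k}} - \theta^{\mathrm{LoRA}}_{D_{m+k}}\|_F$. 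A convex combination is at most the largest of its terms, which gives the stated maximum.

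The main care point is interpreting $\hat{\theta}(\boldsymbol{\alpha})$ correctly. It is a combination of full parameter vectors, each equal to $\theta_{\text{base}}$ plus a LoRA update. Because the weights sum to one, the base parameters are reproduced exactly and only the adapters get mixed, so the term-by-term subtraction above is legitimate. Everything else is routine.
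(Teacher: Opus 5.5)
Your proposal is correct and matches the paper's proof step for step. Both apply the triangle inequality over the $N$ metrics, use the Lipschitz bound with the common parameter distance factored out to obtain $L$, subtract $\theta^M$ and $\hat{\theta}$ term by term, and bound the resulting convex combination of norms by its maximum. Your closing care point is harmless but unnecessary: the term-by-term subtraction follows from linearity alone, whether or not the weights sum to one.
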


\begin{proof}
For any $\boldsymbol{\alpha} \in \triangle^K$:
\begin{align}
    \left|F^M(\boldsymbol{\alpha}) - \hat{F}(\boldsymbol{\alpha})\right| &= \left|\frac{1}{N}\sum_{j=1}^N \Big[f_j\!\big(\theta^M(\boldsymbol{\alpha})\big) - f_j\!\big(\hat{\theta}(\boldsymbol{\alpha})\big)\Big]\right| \nonumber \\
    &\le \frac{1}{N}\sum_{j=1}^N L_j \left\|\theta^M(\boldsymbol{\alpha}) - \hat{\theta}(\boldsymbol{\alpha})\right\|_F \nonumber \\
    &= L \left\|\alpha_{\mathrm{old}} \Big(\theta^{\mathrm{full}}_{\mathrm{old}} - \theta^{\mathrm{LoRA}}_{\mathrm{old}}\Big) + \sum_{k=1}^K \alpha_{m+k} \Big(\theta^{\mathrm{full}}_{D_{m+k}} - \theta^{\mathrm{LoRA}}_{D_{m+k}}\Big)\right\|_F \nonumber \\
    &\le L \left[\alpha_{\mathrm{old}} \left\|\theta^{\mathrm{full}}_{\mathrm{old}} - \theta^{\mathrm{LoRA}}_{\mathrm{old}}\right\|_F + \sum_{k=1}^K \alpha_{m+k} \left\|\theta^{\mathrm{full}}_{D_{m+k}} - \theta^{\mathrm{LoRA}}_{D_{m+k}}\right\|_F \right] \nonumber \\
    &\le L \cdot \max\!\Biggl\{
        \left\|\theta^{\mathrm{full}}_{\mathrm{old}} - \theta^{\mathrm{LoRA}}_{\mathrm{old}}\right\|_F,\;
        \max_{k \in [K]} \left\|\theta^{\mathrm{full}}_{D_{m+k}} - \theta^{\mathrm{LoRA}}_{D_{m+k}}\right\|_F
    \Biggr\}. \qedhere 
\end{align}
\end{proof}

\begin{proposition}[Merging approximation bound]
\label{prop:merge_lip}
Under the same Lipschitz condition as Proposition~\ref{prop:lora_lip}, let $\theta^{\mathrm{train}}(\boldsymbol{\alpha})$ denote the model trained on the expanded mixture $E(\boldsymbol{\alpha})$, and let $\theta^M(\boldsymbol{\alpha})$ denote the merged full-model proxy. Then:
\begin{align}
    \varepsilon_{\mathrm{merge}} \;\le\; L \cdot \sup_{\boldsymbol{\alpha} \in \triangle^K} \left\|\theta^{\mathrm{train}}(\boldsymbol{\alpha}) - \theta^M(\boldsymbol{\alpha})\right\|_F.
    \label{eq:merge_lip_app}
\end{align}
\end{proposition}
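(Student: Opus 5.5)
The bound follows from a pointwise estimate in $\boldsymbol{\alpha}$ followed by a supremum, in the same way as the proof of Proposition~\ref{prop:lora_lip}. Fix any $\boldsymbol{\alpha} \in \triangle^K$. By the definitions of $F$ and $F^M$, and then the triangle inequality applied to the average over $j$,
\begin{align*}
\left|F(\boldsymbol{\alpha}) - F^M(\boldsymbol{\alpha})\right|
= \left|\frac{1}{N}\sum_{j=1}^N \Big[f_j\big(\theta^{\mathrm{train}}(\boldsymbol{\alpha})\big) - f_j\big(\theta^M(\boldsymbol{\alpha})\big)\Big]\right|
\le \frac{1}{N}\sum_{j=1}^N \left|f_j\big(\theta^{\mathrm{train}}(\boldsymbol{\alpha})\big) - f_j\big(\theta^M(\boldsymbol{\alpha})\big)\right|.
\end{align*}

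Next, apply the $L_j$-Lipschitz hypothesis to each summand, with $\theta = \theta^{\mathrm{train}}(\boldsymbol{\alpha})$ and $\theta' = \theta^M(\boldsymbol{\alpha})$. This gives
\[
\left|F(\boldsymbol{\alpha}) - F^M(\boldsymbol{\alpha})\right| \le \frac{1}{N}\sum_{j=1}^N L_j \left\|\theta^{\mathrm{train}}(\boldsymbol{\alpha}) - \theta^M(\boldsymbol{\alpha})\right\|_F = L\left\|\theta^{\mathrm{train}}(\boldsymbol{\alpha}) - \theta^M(\boldsymbol{\alpha})\right\|_F .
\]
The right-hand side is at most $L \sup_{\boldsymbol{\alpha}' \in \triangle^K}\|\theta^{\mathrm{train}}(\boldsymbol{\alpha}') - \theta^M(\boldsymbol{\alpha}')\|_F$, since $L \ge 0$. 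Taking the supremum of the left-hand side over $\boldsymbol{\alpha} \in \triangle^K$ then yields \eqref{eq:merge_lip_app}, because $\varepsilon_{\mathrm{merge}}$ is defined as exactly that supremum.

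There is no real obstacle here. The one point to check is bookkeeping: $\theta^{\mathrm{train}}(\boldsymbol{\alpha})$ and $\theta^M(\boldsymbol{\alpha})$ must live in the same parameter space, so that the Frobenius distance is well defined. This holds because both are full-parameter models obtained from $\theta_{\text{base}}$.

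If the supremum of the parameter gap is infinite, the bound holds trivially. Unlike Proposition~\ref{prop:lora_lip}, no convex-combination argument is needed, because the gap between $\theta^{\mathrm{train}}$ and $\theta^M$ is left as the primitive quantity. Any further control of it, for instance via linear mode connectivity as in Corollary~\ref{cor:merge_vanish}, is a separate matter.
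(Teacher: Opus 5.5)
Your proof is correct and follows the paper's argument: a pointwise bound on $|F(\boldsymbol{\alpha}) - F^M(\boldsymbol{\alpha})|$ via the triangle inequality and the $L_j$-Lipschitz hypothesis, giving $L\|\theta^{\mathrm{train}}(\boldsymbol{\alpha}) - \theta^M(\boldsymbol{\alpha})\|_F$, followed by taking the supremum over $\triangle^K$. Your remarks on $L \ge 0$, the infinite-supremum case, and the shared parameter space are harmless extra care that the paper leaves implicit.
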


\begin{proof}
For any $\boldsymbol{\alpha} \in \triangle^K$:
\begin{align}
    \left|F(\boldsymbol{\alpha}) - F^M(\boldsymbol{\alpha})\right| &= \left|\frac{1}{N}\sum_{j=1}^N \Big[f_j\!\big(\theta^{\text{train}}(\boldsymbol{\alpha})\big) - f_j\!\big(\theta^M(\boldsymbol{\alpha})\big)\Big]\right| \nonumber \\
    &\le \frac{1}{N}\sum_{j=1}^N L_j \left\|\theta^{\text{train}}(\boldsymbol{\alpha}) - \theta^M(\boldsymbol{\alpha})\right\|_F \nonumber \\
    &= L \left\|\theta^{\text{train}}(\boldsymbol{\alpha}) - \theta^M(\boldsymbol{\alpha})\right\|_F. \nonumber
\end{align}
Taking the supremum over $\boldsymbol{\alpha}$ yields the result.
\end{proof}

\begin{corollary}[Merging error vanishes under linear mode connectivity]
\label{cor:merge_vanish}
Define the linearity gap
\[
\delta_{\mathrm{LMC}} := \sup_{\boldsymbol{\alpha} \in \triangle^K} \left\|\theta^{\mathrm{train}}(\boldsymbol{\alpha}) - \theta^M(\boldsymbol{\alpha})\right\|_F.
\]
For any $\varepsilon > 0$, if $\delta_{\mathrm{LMC}} \le \varepsilon / L$, then $\varepsilon_{\mathrm{merge}} \le \varepsilon$.
\end{corollary}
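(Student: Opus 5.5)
The corollary follows directly from Proposition~\ref{prop:merge_lip}, so the plan is a one-line substitution. I will work under the same Lipschitz hypothesis as Propositions~\ref{prop:lora_lip} and~\ref{prop:merge_lip}: each $f_j$ is $L_j$-Lipschitz in the Frobenius norm, and $L=\frac{1}{N}\sum_j L_j$. I also assume $L>0$, since otherwise every $f_j$ is constant, $\varepsilon_{\mathrm{merge}}=0$ trivially, and the ratio $\varepsilon/L$ is not needed.

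First I will note that the quantity $\sup_{\boldsymbol{\alpha}\in\triangle^K}\|\theta^{\mathrm{train}}(\boldsymbol{\alpha})-\theta^M(\boldsymbol{\alpha})\|_F$ on the right-hand side of \eqref{eq:merge_lip_app} is, by definition, exactly $\delta_{\mathrm{LMC}}$. Proposition~\ref{prop:merge_lip} therefore reads $\varepsilon_{\mathrm{merge}}\le L\,\delta_{\mathrm{LMC}}$.

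Second, I will apply the hypothesis $\delta_{\mathrm{LMC}}\le\varepsilon/L$. Multiplying both sides by $L\ge 0$ preserves the inequality, which gives
\[
\varepsilon_{\mathrm{merge}}\le L\,\delta_{\mathrm{LMC}}\le L\cdot\frac{\varepsilon}{L}=\varepsilon .
\]

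There is no real obstacle here. The only points needing care are the degenerate case $L=0$ and the observation that the supremum in the definition of $\delta_{\mathrm{LMC}}$ may be $+\infty$, in which case the hypothesis fails and the statement is vacuous. I would add a brief remark connecting $\delta_{\mathrm{LMC}}$ to linear mode connectivity, but it is interpretive and not part of the logical argument.
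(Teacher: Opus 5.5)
Your proposal is correct and is the same argument as the paper's: identify the supremum in Proposition~\ref{prop:merge_lip} with $\delta_{\mathrm{LMC}}$, then chain $\varepsilon_{\mathrm{merge}} \le L\,\delta_{\mathrm{LMC}} \le L\cdot\varepsilon/L = \varepsilon$. Your remarks on the degenerate case $L=0$ and on $\delta_{\mathrm{LMC}}=+\infty$ are harmless extras that the paper does not address.
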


\begin{proof}
$\varepsilon_{\mathrm{merge}} \le L \cdot \delta_{\mathrm{LMC}} \le L \cdot \varepsilon / L = \varepsilon$.
\end{proof}

The linearity gap $\delta_{\mathrm{LMC}}$ measures how well the convex hull of the old-data model and the new-domain models approximates actual training on the expanded mixture. Because \method compresses all historical data into the single component $\theta^{\mathrm{full}}_{\mathrm{old}}$, it only needs this reduced simplex to be well behaved, rather than requiring one separately trained model for every historical domain.

Linear mode connectivity says that moving along the interpolation path between endpoint models does not create a large loss barrier. That observation is exactly why model merging is a plausible proxy in \method: it suggests that linear interpolation should not cause catastrophic loss blowups. 



\end{document}